\newtheorem{Pb}{Problem}
\newtheorem{Def}{Definition}
\newtheorem{Thm}{Theorem}
\newtheorem{Lem}{Lemma}
\newtheorem{corollary}{Corollary}
\newtheorem{remark}{Remark}
\newcommand\Red[1]{\textcolor{red}{#1}}
\renewcommand\Red[1]{#1}
\newcommand\projecturl{\url{https://github.com/wbounliphone/reldep}}
\icmltitlerunning{A low variance consistent test of relative dependency}
\begin{document} 

\twocolumn[
\icmltitle{A low variance consistent test of relative dependency}

\icmlauthor{Wacha Bounliphone}{wacha.bounliphone@centralesupelec.fr}
\icmladdress{CentraleSup\'{e}lec \& Inria, Grande Voie des Vignes, 92295 Ch\^{a}tenay-Malabry, France}
\icmlauthor{Arthur Gretton}{arthur.gretton@gmail.com}
\icmladdress{Gatsby Computational Neuroscience Unit, University College London, United Kingdom}
\icmlauthor{Arthur Tenenhaus}{arthur.tenenhaus@centralesupelec.fr}
\icmladdress{CentraleSup\'{e}lec, 3 rue Joliot-Curie, 91192 Gif-Sur-Yvette, France}
\icmlauthor{Matthew B.\ Blaschko}{matthew.blaschko@inria.fr}
\icmladdress{Inria \& CentraleSup\'{e}lec, Grande Voie des Vignes, 92295 Ch\^{a}tenay-Malabry, France}

\icmlkeywords{kernels methods, relative dependency}

\vskip 0.3in
]

\begin{abstract}
We describe a novel non-parametric statistical hypothesis test of relative dependence between a source variable and two candidate target variables.  Such a test enables us to determine whether one source variable is significantly more dependent on a first target variable or a second.  Dependence is measured via the Hilbert-Schmidt Independence Criterion (HSIC), resulting in a pair of empirical dependence measures (source-target 1, source-target 2).  We test whether the first dependence measure is significantly larger than the second. Modeling the covariance between these HSIC statistics leads to a provably more powerful test than the construction of independent HSIC statistics by sub-sampling.  The resulting test is consistent and unbiased, and (being based on U-statistics) has favorable convergence properties.  The test can be computed in quadratic time, matching the computational complexity of standard empirical HSIC estimators.  The effectiveness of the test is demonstrated on several real-world problems: we identify language groups from a multilingual corpus, and we prove that tumor location is more dependent on gene expression than chromosomal imbalances.  Source code is available for download at \projecturl.

\end{abstract}

\section{Introduction}

Tests of dependence are important tools in statistical analysis, and are widely applied in many data analysis contexts. Classical criteria include Spearman's $\rho$ and Kendall's $\tau$, which can 
detect non-linear monotonic dependencies. 
More recent research on dependence measurement has focused on non-parametric measures of dependence, which apply even when the dependence is nonlinear, or the variables are multivariate or non-euclidean (for instance images, strings, and graphs).
The statistics for such tests are diverse, and include kernel measures of covariance \cite{GreFukTeoSonetal08,ZhaPetJanSch11} and correlation \cite{DauNki98,FukGreSunSch08}, distance covariances (which are instances of kernel tests) \cite{SzeRizBak07,SejSriGreFuk13}, kernel regression tests~\cite{Cortes09, Gunn02}, rankings  \cite{HelHelGor13}, and space partitioning approaches \cite{GreGyo10,ResTesFinGroetal11,KinAtw14}.
\Red{Specialization of such methods to univariate linear dependence can yield similar tests to classical approaches such as~\citet{darlington1968multiple,bring1996geometric}.}

For many  problems in data analysis, however, the question of whether dependence exists is secondary: there may be multiple dependencies, and the question becomes which dependence is the strongest. For instance, in neuroscience, multiple stimuli may be present (e.g.\ visual and audio), and it is of interest to determine which of the two has a stronger influence on brain activity \cite{TroKorLan11}. In 
automated translation 
\cite{PetBraClo12}, it is of interest to determine whether documents in a source language are a significantly better match to those in one target language than to another target language, either as a measure of difficulty of the respective learning tasks, or as a basic tool for comparative linguistics.

We present a statistical test which determines whether two target variables have a significant difference in their dependence on a third, source variable. The dependence between each of the target variables and the source is computed using the Hilbert-Schmidt Independence Criterion  \cite{GreBouSmoSch05,GreFukTeoSonetal08}.\footnote{Dependency can also be tested with the correlation operator.  However, Fukumizu et al., \yrcite{fukumizu2007statistical} show that unlike the covariance operator, the asymptotic distribution of the norm of the correlation operator is unknown, so the construction of a computationally efficient test of relative dependence remains an open problem.}  Care must be taken in analyzing the asymptotic behavior of the test statistics, since the two measures of dependence will themselves be correlated: they are both computed with respect to the same source.  Thus, we derive the {\em joint} asymptotic distribution of both dependencies. 
The derivation of our test utilizes classical results of $U$-statistics~\cite{hoeffding1963probability,Serfling1981,arcones1993limit}. In particular, we make use of results by Hoeffding \yrcite{hoeffding1963probability} and Serfling \yrcite{Serfling1981} to determine the asymptotic joint distributions of the statistics (see Theorem~\ref{thm:joint_asympt_dist}). Consequently, we derive the \emph{lowest} variance unbiased estimator of the test statistic.

We prove our approach to have greater statistical power than constructing two uncorrelated statistics on the same data by subsampling, and testing on these.  In experiments, we are able to successfully test which of two variables is most strongly related to a third, in synthetic examples, in a language group identification task, and in a task for identifying the relative strength of factors for Glioma type in a pediatric patient population.

To our knowledge, there do not exist competing non-parametric tests to determine which of two dependencies is strongest.  One related area is that of multiple regression analysis (e.g. \cite{SenSri11}). In this case a linear model is assumed, and it is determined whether individual inputs have a statistically significant effect on an output variable. The procedure does not address the question of whether the influence of one variable is higher than that of another to a statistically significant degree. The problem of variable selection has also been investigated in the case of nonlinear relations between the inputs and outputs \cite{Cortes09, CorMohRos12,SonSmoGreBedetal12}, however this again does not address which of two variables most strongly influences a third. A less closely related area is that of detecting three-variable interactions \cite{SejGreBer13}, where it is determined whether there exists any factorization of the joint distribution over three variables. This test again does not address the issue of finding which connections are strongest, however.


\section{Definitions and description of HSIC}\label{sec:theory}

We base our underlying notion of dependence on the Hilbert-Schmidt Independence Criterion~\cite{GreBouSmoSch05,GreFukTeoSonetal08, SonSmoGreBedetal12}.  All results in this section except for Problem~\ref{pb:mainpb} can be found in these previous works.
\begin{Def}\Red{~\citep[Definition 1,Lemma 1:  Hilbert-Schmidt Independence Criterion]{GreBouSmoSch05}}

\label{def:HSIC_crosscov}
Let $P_{xy}$ be a Borel probability measure over  over ($\mathcal{X} \times \mathcal{Y}, \Gamma \times \Lambda$)  with $\Gamma$ and $\Lambda$ the respective Borel sets on $\mathcal{X}$ and $\mathcal{Y}$, and $P_{x}$ and $P_y$ the  marginal distributions  on domains $\mathcal{X}$ and $\mathcal{Y}$.
 Given separable RKHSs $\mathcal{F}$ and $\mathcal{G}$, 
\Red{
the Hilbert-Schmidt Independence Criterion (HSIC) is defined as the squared HS-norm of the associated cross-covariance operator $C_{xy}$. 
When the kernels $k$, $l$ are associated uniquely withs respective RKHSs $\mathcal{F}$ and $\mathcal{G}$ and bounded, HSIC can be expressed in terms of expectations of kernel functions 
}
\begin{align}
&HSIC(\mathcal{F}, \mathcal{G}, P_{xy}) := \Vert C_{xy} \Vert ^2_{HS}  \nonumber \\
&= \mathbb{E}_{xx'yy'} \left[ k(x,x')l(y,y') \right] + \mathbb{E}_{xx'} \left[ k(x,x') \right] 
\mathbb{E}_{yy'} \left[ l(y,y') \right] \nonumber \\
&- 2 \mathbb{E}_{xy} \left[ \mathbb{E}_{x'} [k(x,x')] \mathbb{E}_{y'} [l(y,y')] \right]  .
\end{align}
\end{Def}
HSIC determines independence: HSIC = 0 iff $P_{xy} = P_xP_y$ when kernels $k$ and $l$ are characteristic on their respective marginal domains~\cite{Asimplercondition}.

With this choice, the problem we would like to solve is described as follows:
\begin{Pb}\label{pb:mainpb} 
Given separable RKHSs  $\mathcal{F}$, $\mathcal{G}$, and $\mathcal{H}$ with $HSIC(\mathcal{F}, \mathcal{G}, P_{xy}) > 0$ and $HSIC(\mathcal{F}, \mathcal{H}, P_{xz}) > 0$, we test the null hypothesis $\mathcal{H}_0$ : $HSIC(\mathcal{F}, \mathcal{G}, P_{xy}) \leq HSIC(\mathcal{F}, \mathcal{H}, P_{xz})$ versus the alternative hypothesis $\mathcal{H}_1$ : $HSIC(\mathcal{F}, \mathcal{G}, P_{xy}) > HSIC(\mathcal{F}, \mathcal{H}, P_{xz})$ at a given significance level $\alpha$.
\end{Pb}

We now describe the asymptotic behavior of the HSIC for dependent variables.

\begin{Thm}\Red{~\citep[Theorem 2:  Unbiased estimator for $HSIC(\mathcal{F}, \mathcal{G}, P_{xy})$]{SonSmoGreBedetal12}}
\label{thm:Unbiased_HSIC}
\Red{We denote by $\mathcal{S}$ the set of observations $\lbrace (x_1,y_1), ..., (x_m,y_m) \rbrace$ of size $m$ drawn i.i.d.\ from $P_{xy}$.  The unbiased estimator $HSIC_m(\mathcal{F},\mathcal{G},\mathcal{S})$ is given by}
\begin{align}
\label{eq:Unbiased_HSIC}
HSIC_m &(\mathcal{F},\mathcal{G},\mathcal{S}) = \dfrac{1}{m(m-3)} \times \\
& \left[ \operatorname{Tr} ( \tilde{\mathbf{K}} \tilde{\mathbf{L}} )  + \dfrac{\mathds{1}'\tilde{\mathbf{K}} \mathds{1} \mathds{1}' \tilde{\mathbf{L}} \mathds{1}}{(m-1)(m-2)} - \dfrac{2}{m-2} \mathds{1}' \tilde{\mathbf{K}} \tilde{\mathbf{L}} \mathds{1} \right] \nonumber
\end{align}
where $\tilde{\mathbf{K}}$ and $\tilde{\mathbf{L}}$ are related to $\mathbf{K}$ and $\mathbf{L}$ by $\tilde{\mathbf{K}}_{ij} = (1- \delta_{ij})\tilde{\mathbf{K}}_{ij}$ and $\tilde{\mathbf{L}}_{ij} = (1- \delta_{ij})\tilde{\mathbf{L}}_{ij}$.
\end{Thm}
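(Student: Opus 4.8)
The plan is to verify the identity $\mathbb{E}_{\mathcal{S}}\!\left[HSIC_m(\mathcal{F},\mathcal{G},\mathcal{S})\right] = HSIC(\mathcal{F},\mathcal{G},P_{xy})$ directly, by computing the expectation of each of the three matrix terms appearing in \eqref{eq:Unbiased_HSIC} and recombining them. Conceptually, $HSIC_m$ is a fourth-order $U$-statistic: one can exhibit a permutation-symmetrized core $h$ of four i.i.d.\ pairs whose population mean equals the right-hand side of Definition~\ref{def:HSIC_crosscov}, with $HSIC_m$ the associated $U$-statistic over ordered $4$-tuples of \emph{distinct} indices, so that unbiasedness is automatic from the theory of $U$-statistics. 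The actual work is to show that this $U$-statistic collapses to the stated closed matrix form, and it is cleaner to carry out the bookkeeping directly on the matrix expressions.

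Write $A := \mathbb{E}_{xx'yy'}[k(x,x')l(y,y')]$, $B := \mathbb{E}_{xx'}[k(x,x')]\,\mathbb{E}_{yy'}[l(y,y')]$, and $C := \mathbb{E}_{xy}\left[\mathbb{E}_{x'}[k(x,x')]\,\mathbb{E}_{y'}[l(y,y')]\right]$, so that $HSIC = A + B - 2C$. The zeroing of the diagonals of $\tilde{\mathbf{K}}$ and $\tilde{\mathbf{L}}$ ensures that every sum below ranges only over tuples in which the paired indices are distinct, so no degenerate term with a repeated argument of $k$ or $l$ survives. Expanding, $\operatorname{Tr}(\tilde{\mathbf{K}}\tilde{\mathbf{L}}) = \sum_{i\neq j} k_{ij}l_{ij}$, $\mathds{1}'\tilde{\mathbf{K}}\mathds{1}\,\mathds{1}'\tilde{\mathbf{L}}\mathds{1} = \sum_{i\neq j}\sum_{q\neq r} k_{ij}l_{qr}$, and $\mathds{1}'\tilde{\mathbf{K}}\tilde{\mathbf{L}}\mathds{1} = \sum_{j}\sum_{i\neq j}\sum_{q\neq j} k_{ij}l_{jq}$. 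Taking expectations and using the i.i.d.\ assumption, each summand equals $A$, $B$, or $C$ according to the pattern of coincidences among its indices, and it remains to count the ordered index tuples realizing each pattern. The first term gives $m(m-1)A$. For the second, splitting on whether $\{i,j\}$ and $\{q,r\}$ are disjoint, share exactly one index, or coincide as sets, gives $m(m-1)(m-2)(m-3)B + 4m(m-1)(m-2)C + 2m(m-1)A$. For the third, splitting on whether $i=q$ or $i\neq q$, gives $m(m-1)(m-2)C + m(m-1)A$.

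Substituting these three expressions into \eqref{eq:Unbiased_HSIC} — which is well-defined as soon as $m\geq 4$ — and simplifying, one checks that the coefficient of $A$ collapses to $m(m-3)$, that of $B$ to $m(m-3)$, and that of $C$ to $-2m(m-3)$; dividing through by $m(m-3)$ leaves exactly $A + B - 2C = HSIC(\mathcal{F},\mathcal{G},P_{xy})$, as claimed. The only genuinely delicate step is the combinatorial case analysis for $\mathds{1}'\tilde{\mathbf{K}}\mathds{1}\,\mathds{1}'\tilde{\mathbf{L}}\mathds{1}$ and $\mathds{1}'\tilde{\mathbf{K}}\tilde{\mathbf{L}}\mathds{1}$: one must ensure the four ``single shared index'' configurations (each contributing $C$) and the two ``sets coincide'' orderings (each contributing $A$, using symmetry of $k$ and $l$) are enumerated without overlap and with the right multiplicities. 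Everything downstream of the counting is routine algebra.
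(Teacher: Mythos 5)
Your proof is correct. The paper itself states this result without proof, citing \citet{SonSmoGreBedetal12}, and your direct computation of $\mathbb{E}[\operatorname{Tr}(\tilde{\mathbf{K}}\tilde{\mathbf{L}})]=m(m-1)A$, $\mathbb{E}[\mathds{1}'\tilde{\mathbf{K}}\mathds{1}\,\mathds{1}'\tilde{\mathbf{L}}\mathds{1}]=m(m-1)(m-2)(m-3)B+4m(m-1)(m-2)C+2m(m-1)A$, and $\mathbb{E}[\mathds{1}'\tilde{\mathbf{K}}\tilde{\mathbf{L}}\mathds{1}]=m(m-1)(m-2)C+m(m-1)A$ is exactly the standard argument there; I checked the three counts and the resulting coefficients $m(m-3)$, $m(m-3)$, $-2m(m-3)$, and all are right for $m\geq 4$.
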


\begin{Thm}\Red{~\citep[Theorem 3:  U-statistic of HSIC]{SonSmoGreBedetal12}\label{thm:Ustat_HSIC}}
This finite sample unbiased estimator of $ HSIC_m^{XY}$ can be written as a U-statistic,
\begin{equation}
HSIC_m^{XY} = (m)_4^{-1} \displaystyle \sum_{(i,j,q,r) \in i^m_4} h_{ijqr}
\label{eq:Ustat_HSIC1}
\end{equation}
where $(m)_4 := \dfrac{m!}{(m-4)!}$, the index set $i^m_4$ denotes the set of all $4-$tuples drawn without replacement from the set $\left\lbrace 1, \dots m \right\rbrace $, and the kernel h of the U-statistic is defined as
\begin{equation}
h_{ijqr} = \dfrac{1}{24} \displaystyle \displaystyle \sum_{(s,t,u,v)}^{(i,j,q,r)} k_{st} (l_{st} + l_{uv} -2 l_{su})
\label{eq:Ustat_HSIC2}
\end{equation}
where the kernels $k$ and $l$  are associated uniquely with respective reproducing kernel Hilbert spaces $\mathcal{F}$ and $\mathcal{G}$.
\end{Thm}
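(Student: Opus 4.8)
The statement is a combinatorial identity: the closed-form matrix expression for $HSIC_m^{XY}$ in Theorem~\ref{thm:Unbiased_HSIC} must be shown to coincide, term for term, with the U-statistic $(m)_4^{-1}\sum_{(i,j,q,r)\in i^m_4} h_{ijqr}$. The plan is to start from the U-statistic side, expand it, and collapse it onto the three matrix quantities $\operatorname{Tr}(\tilde{\mathbf{K}}\tilde{\mathbf{L}})$, $\mathds{1}'\tilde{\mathbf{K}}\mathds{1}\,\mathds{1}'\tilde{\mathbf{L}}\mathds{1}$, and $\mathds{1}'\tilde{\mathbf{K}}\tilde{\mathbf{L}}\mathds{1}$. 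First, since $h_{ijqr}=\tfrac{1}{24}\sum_{(s,t,u,v)}^{(i,j,q,r)} k_{st}(l_{st}+l_{uv}-2l_{su})$ is an average over the $4!$ permutations of $(i,j,q,r)$, and summing any one fixed permutation of the summand over all ordered distinct $4$-tuples gives the same total, one obtains
\begin{equation*}
\sum_{(i,j,q,r)\in i^m_4} h_{ijqr} = \sum_{(s,t,u,v)\in i^m_4} k_{st}\bigl(l_{st}+l_{uv}-2l_{su}\bigr),
\end{equation*}
so it remains to evaluate the three sums $S_1=\sum k_{st}l_{st}$, $S_2=\sum k_{st}l_{uv}$, $S_3=\sum k_{st}l_{su}$, each over $(s,t,u,v)\in i^m_4$, and combine them with weights $(+1,+1,-2)$.

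The second step is to collapse each $S_i$ by inclusion--exclusion over index coincidences. For $S_1$ the summand ignores $u,v$, so $S_1=(m-2)(m-3)\sum_{s\neq t}k_{st}l_{st}=(m-2)(m-3)\operatorname{Tr}(\tilde{\mathbf{K}}\tilde{\mathbf{L}})$, using that $\tilde{\mathbf{K}},\tilde{\mathbf{L}}$ have vanishing diagonals and are symmetric. For $S_2$ I would begin from $\mathds{1}'\tilde{\mathbf{K}}\mathds{1}\,\mathds{1}'\tilde{\mathbf{L}}\mathds{1}=\sum_{s\neq t}\sum_{u\neq v}k_{st}l_{uv}$ and subtract the configurations in which $\{u,v\}$ meets $\{s,t\}$; these residual terms re-express themselves, via the same three matrix quantities together with $\operatorname{Tr}(\tilde{\mathbf{K}}\tilde{\mathbf{L}})$ and $\mathds{1}'\tilde{\mathbf{K}}\tilde{\mathbf{L}}\mathds{1}$, as precisely the correction needed to restrict to four distinct indices. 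Similarly, $S_3$ comes from the single-shared-index contraction $\mathds{1}'\tilde{\mathbf{K}}\tilde{\mathbf{L}}\mathds{1}$, corrected for the coincidences among the remaining indices (e.g.\ $t=u$ again produces a $\operatorname{Tr}$-type contribution). Assembling $S_1+S_2-2S_3$, the collision (lower-order) terms cancel --- this cancellation is exactly the algebraic reason the estimator of Theorem~\ref{thm:Unbiased_HSIC} is unbiased --- and dividing by $(m)_4=m(m-1)(m-2)(m-3)$ produces the prefactor $\tfrac{1}{m(m-3)}$ and the internal factors $\tfrac{1}{(m-1)(m-2)}$ and $\tfrac{2}{m-2}$ of Eq.~\eqref{eq:Unbiased_HSIC}.

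The main obstacle is the bookkeeping in the second step: one must track every way two (or three) of the four indices can collide in $S_2$ and $S_3$, and check that after weighting by $1,1,-2$ all collision terms vanish while the surviving $\operatorname{Tr}$, $\mathds{1}'\tilde{\mathbf{K}}\mathds{1}\mathds{1}'\tilde{\mathbf{L}}\mathds{1}$, and $\mathds{1}'\tilde{\mathbf{K}}\tilde{\mathbf{L}}\mathds{1}$ terms carry exactly the right rational coefficients --- note in particular that the coefficient of $\operatorname{Tr}(\tilde{\mathbf{K}}\tilde{\mathbf{L}})$ is not simply $(m-2)(m-3)/(m)_4$ but receives extra contributions from collisions in $S_2$ and $S_3$, which is what turns $\tfrac{1}{m(m-1)}$ into $\tfrac{1}{m(m-3)}$. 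As a sanity check one can invoke the uniqueness of the symmetric kernel of a U-statistic: Theorem~\ref{thm:Unbiased_HSIC} already supplies an unbiased, permutation-symmetric estimator depending on the sample only through $4$-tuples, so it must equal the U-statistic whose kernel is the symmetrization of any degree-$4$ function with the correct population mean, and $\mathbb{E}[k_{st}(l_{st}+l_{uv}-2l_{su})]$ over four i.i.d.\ draws equals $HSIC(\mathcal{F},\mathcal{G},P_{xy})$ directly from Definition~\ref{def:HSIC_crosscov}. I would present the explicit expansion as the proof and retain this uniqueness remark as corroboration.
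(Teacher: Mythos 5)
Your proposal is correct, but note that the paper itself offers no proof of this statement: Theorem~\ref{thm:Ustat_HSIC} is imported verbatim from \citet{SonSmoGreBedetal12}, so there is no internal argument to compare against. Your direct combinatorial verification is exactly the right (and essentially the standard) route, and the bookkeeping you flag as the main obstacle does close. Concretely, with all sums over $i^m_4$, one gets $S_1=(m-2)(m-3)\operatorname{Tr}(\tilde{\mathbf{K}}\tilde{\mathbf{L}})$, $S_3=(m-3)\bigl(\mathds{1}'\tilde{\mathbf{K}}\tilde{\mathbf{L}}\mathds{1}-\operatorname{Tr}(\tilde{\mathbf{K}}\tilde{\mathbf{L}})\bigr)$, and
\begin{equation*}
S_2=\mathds{1}'\tilde{\mathbf{K}}\mathds{1}\,\mathds{1}'\tilde{\mathbf{L}}\mathds{1}+2\operatorname{Tr}(\tilde{\mathbf{K}}\tilde{\mathbf{L}})-4\,\mathds{1}'\tilde{\mathbf{K}}\tilde{\mathbf{L}}\mathds{1},
\end{equation*}
so that $S_1+S_2-2S_3$ has coefficients $(m-1)(m-2)$, $1$, and $-2(m-1)$ on the three matrix quantities; dividing by $(m)_4$ reproduces the prefactors $\tfrac{1}{m(m-3)}$, $\tfrac{1}{m(m-3)(m-1)(m-2)}$, and $\tfrac{-2}{m(m-3)(m-2)}$ of Eq.~\eqref{eq:Unbiased_HSIC}, confirming your observation that the collision terms are what convert $(m-2)(m-3)/(m)_4$ into $1/(m(m-3))$. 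The one point to tighten is your ``uniqueness of the symmetric kernel'' sanity check: unbiasedness plus permutation symmetry of the estimator in Theorem~\ref{thm:Unbiased_HSIC} does not by itself force it to coincide with the U-statistic built from $h$ --- distinct symmetric degree-$4$ kernels with the same expectation under a \emph{single} fixed $P$ yield distinct unbiased estimators. The uniqueness argument only works if you require unbiasedness for all distributions in a sufficiently rich class (which does pin down the symmetric kernel); as stated it is corroboration, not an independent proof, so keeping the explicit expansion as the actual proof is the right call.
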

\begin{Thm}\Red{~\citep[Theorem 1: Asymptotic distribution of $HSIC_m$]{GreFukTeoSonetal08}}
\label{thm:Asymp_dist_HSIC}
If $\mathbf{E}[h^2] < \infty$, and source and targets are not independent, then, \Red{under $\mathcal{H}_1$}, as $m \rightarrow \infty$,
\Red{
\begin{equation}
\sqrt{m}(HSIC_m^{XY} - HSIC(\mathcal{F}, \mathcal{G}, P_{xy})) \overset{d}{\longrightarrow} \mathcal{N}(0, \sigma^2_{XY})
\label{eq:Asymp_dist_HSIC}
\end{equation}
where $\sigma^2_{XY} = 16 \left( \mathbb{E}_i \left( \mathbf{E}_{j,q,r} h_{ijqr} \right)^2 - HSIC(\mathcal{F}, \mathcal{G}, P_{xy}))  \right)$
with $\mathbf{E}_{j,q,r} := \mathbf{E}_{S_j,S_q,S_r}$.  Its empirical estimate is $\hat{\sigma}_{XY} = 16\left( R_{XY} - (HSIC_m^{XY})^2 \right)$
}
where $R_{XY} = \dfrac{1}{m} \displaystyle\sum_{\substack{i=1}}^m \left( (m-1)_3^{-1} \sum_{(j,q,r) \in i^m_{3} \backslash \left\lbrace i \right\rbrace } h_{ijqr} \right) ^2 $ and the index set $i^m_3 \backslash \left\lbrace i \right\rbrace$ denotes the set of all $3-$tuples drawn without replacement from the set $\left\lbrace 1, \dots m \right\rbrace \backslash \left\lbrace i \right\rbrace$.
\end{Thm}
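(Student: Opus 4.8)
The plan is to read \eqref{eq:Ustat_HSIC1} literally as a one-sample $U$-statistic of degree $4$ with symmetric kernel $h_{ijqr}$ and population value $\mathbb{E}[h_{ijqr}] = HSIC(\mathcal{F},\mathcal{G},P_{xy})$, and then to invoke the classical central limit theorem for non-degenerate $U$-statistics (Hoeffding; see \cite{Serfling1981}, \S5.5). Concretely, I would first introduce the Hoeffding projections $h_c$ for $c=1,\dots,4$, in particular the first projection $h_1(s) := \mathbb{E}_{j,q,r}\bigl[h_{ijqr}\mid S_i = s\bigr] - HSIC(\mathcal{F},\mathcal{G},P_{xy})$, and write the orthogonal ANOVA decomposition $HSIC_m^{XY} - HSIC(\mathcal{F},\mathcal{G},P_{xy}) = \sum_{c=1}^{4}\binom{4}{c} U_m^{(c)}$, where each $U_m^{(c)}$ is a completely degenerate $U$-statistic of order $c$ built from $h_c$. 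The second-moment hypothesis $\mathbb{E}[h^2]<\infty$ is exactly what is needed to make all these projections square-integrable.

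Next I would isolate the leading term, $\binom{4}{1} U_m^{(1)} = \tfrac{4}{m}\sum_{i=1}^{m} h_1(S_i)$, which is $\tfrac{4}{m}$ times a sum of i.i.d.\ mean-zero random variables with variance $\sigma_1^2 := \operatorname{Var}(h_1(S_1)) = \mathbb{E}_i\bigl(\mathbb{E}_{j,q,r} h_{ijqr}\bigr)^2 - HSIC(\mathcal{F},\mathcal{G},P_{xy})^2$. By the ordinary CLT, $\sqrt{m}\cdot\tfrac{4}{m}\sum_i h_1(S_i)\overset{d}{\longrightarrow}\mathcal{N}(0,16\sigma_1^2)$. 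The remaining degenerate pieces satisfy $\operatorname{Var}\bigl(U_m^{(c)}\bigr)=O(m^{-c})$ for $c\ge 2$, hence $\sqrt{m}\,U_m^{(c)} = o_p(1)$; combining the pieces via Slutsky yields \eqref{eq:Asymp_dist_HSIC} with $\sigma_{XY}^2 = 16\sigma_1^2$, matching the stated expression (up to the evident typo $HSIC \mapsto HSIC^2$ in the displayed formula, since $\mathbb{E}[h]=HSIC$).

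For the empirical estimate, I would observe that the inner quantity $(m-1)_3^{-1}\sum_{(j,q,r)\in i_3^m\setminus\{i\}} h_{ijqr}$ is a leave-one-out, unbiased estimator of $h_1(S_i) + HSIC(\mathcal{F},\mathcal{G},P_{xy})$, so that $R_{XY}$ is an average of its squares. A law-of-large-numbers argument (again using $\mathbb{E}[h^2]<\infty$ for uniform integrability) gives $R_{XY}\to \mathbb{E}_i(\mathbb{E}_{j,q,r}h_{ijqr})^2$ and $HSIC_m^{XY}\to HSIC(\mathcal{F},\mathcal{G},P_{xy})$ in probability, so $\hat\sigma_{XY} = 16\bigl(R_{XY} - (HSIC_m^{XY})^2\bigr)\to \sigma_{XY}^2$ by the continuous mapping theorem.

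The main obstacle is non-degeneracy: the $\sqrt{m}$ rate and the Gaussian limit are only valid when $\sigma_1^2>0$, i.e.\ when $s\mapsto\mathbb{E}_{j,q,r}[h_{ijqr}\mid S_i=s]$ is genuinely non-constant. This is where the hypothesis ``source and targets are not independent'' (equivalently $HSIC(\mathcal{F},\mathcal{G},P_{xy})>0$ under characteristic kernels, i.e.\ $\mathcal{H}_1$) enters: one must argue, from the explicit form of $h_{ijqr}$ in \eqref{eq:Ustat_HSIC2}, that dependence forces the first projection not to vanish; if instead $\sigma_1^2=0$ the statistic is degenerate and converges at rate $m$ to a weighted sum of $\chi^2$ variables, which is precisely the null regime and not covered here. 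Establishing $\sigma_1^2>0$ cleanly, and making the $o_p(m^{-1/2})$ control of the higher-order Hoeffding terms rigorous under only a second-moment assumption, are the steps that require the most care.
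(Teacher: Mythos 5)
Your proposal is correct and follows essentially the same route as the source of this result: the paper does not reprove Theorem~\ref{thm:Asymp_dist_HSIC} but quotes it from \citet{GreFukTeoSonetal08}, whose proof is exactly the classical Hoeffding decomposition of a non-degenerate degree-4 $U$-statistic with the first projection carrying the $\sqrt{m}$-normal limit and the higher-order degenerate terms being $o_p(m^{-1/2})$, as in \citet[Ch.~5]{Serfling1981}. Your two side observations are also right: the displayed population variance should read $HSIC(\mathcal{F},\mathcal{G},P_{xy})^2$ rather than $HSIC(\mathcal{F},\mathcal{G},P_{xy})$ (consistent with the empirical estimate $\hat\sigma_{XY}=16\left(R_{XY}-(HSIC_m^{XY})^2\right)$), and the non-independence hypothesis is precisely what rules out the degenerate ($\sigma_1^2=0$) regime in which the limit would instead be a weighted sum of $\chi^2$ variables at rate $m$.
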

%



\section{A test of relative dependence}\label{sec:relativedep}

In this section we calculate two dependent HSIC statistics and derive the joint asymptotic distribution of these dependent quantities, which is used to construct a consistent test for Problem~\ref{pb:mainpb}.
We next construct a simpler consistent test,  by computing two independent HSIC statistics on sample subsets.  
While the simpler strategy is superficially attractive and less effort to implement, we prove the dependent strategy is strictly more powerful.

\subsection{Joint asymptotic distribution of HSIC and test}\label{joint_asympt_dist}

\Red{
  In the present section, we compute each HSIC estimate on the full dataset, and explicitly obtain the correlations between the resulting empirical dependence measurements $HSIC_m^{XY}$ and $HSIC_m^{XZ}$.}
We denote by $\mathcal{S}_1 = (X,Y,Z)$ the joint sample of observations which are drawn \textit{i.i.d.}\ with respective Borel probability measure
$P_{xyz}$ defined on the domain $\mathcal{X} \times \mathcal{Y} \times \mathcal{Z}$.  The kernels $k$, $l$ and $d$ are associated uniquely with respective reproducing kernel Hilbert spaces $\mathcal{F}$, $\mathcal{G}$ and $\mathcal{H}$.  Moreover, $\mathbf{K}$, $\mathbf{L}$ and $\mathbf{D} \in {R}^{m \times m}$ are kernel matrices containing $k_{ij} = k(x_i,x_j)$, $l_{ij} = l(y_i,y_j)$ and $d_{ij} = d(z_i,z_j)$.  Let $HSIC_m^{XY}$ and $HSIC_m^{XZ}$ be respectively the unbiased estimators of $HSIC(\mathcal{F},\mathcal{G},P_{xy})$ and $HSIC(\mathcal{F},\mathcal{H},P_{xz})$, written as a sum of U-statistics with respective kernels $h_{ijqr}$ and $g_{ijqr}$ as described in \eqref{eq:Ustat_HSIC2},
\begin{align}
h_{ijqr} &= \dfrac{1}{24} \displaystyle \displaystyle \sum_{(s,t,u,v)}^{(i,j,q,r)} k_{st} (l_{st} + l_{uv} -2 l_{su}), \nonumber \\
g_{ijqr} &= \dfrac{1}{24} \displaystyle \displaystyle \sum_{(s,t,u,v)}^{(i,j,q,r)} k_{st} (d_{st} + d_{uv} -2 d_{su}).
\label{eq:Ustat_HSIC3}
\end{align}
\begin{Thm}
\label{thm:joint_asympt_dist}
\emph{\textbf{(Joint asymptotic distribution of HSIC)}} If $\mathbb{E}[h^2] < \infty$ and $\mathbb{E}[g^2] < \infty$, then 
\Red{
\begin{align}
\sqrt{m} 
& \left( \begin{pmatrix}
HSIC_m^{XY} \\ 
HSIC_m^{XZ} 
\end{pmatrix}
-
\begin{pmatrix}
HSIC(\mathcal{F},\mathcal{G},P_{xy}) \\ 
HSIC(\mathcal{F},\mathcal{H},P_{xz}) 
\end{pmatrix}  
\right) \nonumber \\
&\overset{d}{\longrightarrow} \mathcal{N} 
\left( 
\begin{pmatrix}
0 \\ 
0
\end{pmatrix},
\begin{pmatrix} 
\sigma_{XY}^2 & \sigma_{XYXZ} \\ 
\sigma_{XYXZ} & \sigma_{XZ}^2 
\end{pmatrix} 
\right), 
\label{eq:joint_HSIC}
\end{align}
where $\sigma^2_{XY}$ and $\sigma^2_{XZ}$ are as in Theorem~\ref{thm:Asymp_dist_HSIC}. 
The empirical estimate of $\sigma_{XYXZ}$ is
}
$\hat{\sigma}_{XYXZ} = \dfrac{16}{m} \left( R_{XYXZ} - HSIC_m^{XY} HSIC_m^{XZ} \right)$, where 
\begin{equation}
R_{XYXZ}= \dfrac{1}{m} \displaystyle\sum_{\substack{i=1}}^m \left( (m-1)_3^{-2} \sum_{(j,q,r) \in i^m_{3} \backslash \left\lbrace i \right\rbrace } h_{ijqr}  g_{ijqr}\right).
\label{eq:jointasymptR}
\end{equation} 
\end{Thm}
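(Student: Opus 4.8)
The plan is to recognize the pair $(HSIC_m^{XY}, HSIC_m^{XZ})$ as a single $\mathbb{R}^2$-valued $U$-statistic of order four and then apply the classical multivariate central limit theorem for $U$-statistics (Hoeffding \yrcite{hoeffding1963probability}; Serfling \yrcite{Serfling1981}, Sec.~5.5). First I would set $S_i := (x_i, y_i, z_i)$, so that $S_1, \dots, S_m$ are i.i.d.\ draws from $P_{xyz}$, and introduce the vector-valued kernel $\Phi(S_i,S_j,S_q,S_r) := (h_{ijqr}, g_{ijqr})^{\top}$, which is symmetric in its four arguments by the symmetrization in \eqref{eq:Ustat_HSIC2}--\eqref{eq:Ustat_HSIC3}. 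By Theorem~\ref{thm:Ustat_HSIC}, $(HSIC_m^{XY}, HSIC_m^{XZ})^{\top} = (m)_4^{-1}\sum_{(i,j,q,r)\in i^m_4}\Phi(S_i,S_j,S_q,S_r)$ is exactly the $U$-statistic with kernel $\Phi$, and by Theorem~\ref{thm:Unbiased_HSIC} its expectation is $\theta := (HSIC(\mathcal{F},\mathcal{G},P_{xy}),\, HSIC(\mathcal{F},\mathcal{H},P_{xz}))^{\top}$. The hypotheses $\mathbb{E}[h^2]<\infty$ and $\mathbb{E}[g^2]<\infty$ make $\Phi$ componentwise square-integrable, which is what the CLT needs.

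Next I would carry out the Hoeffding (Hájek) projection of each component. Writing $h_1(s) := \mathbb{E}[h_{1jqr}\mid S_1 = s] - HSIC(\mathcal{F},\mathcal{G},P_{xy})$ and $g_1(s) := \mathbb{E}[g_{1jqr}\mid S_1 = s] - HSIC(\mathcal{F},\mathcal{H},P_{xz})$ for the first-order projections, the standard decomposition gives
\[
\sqrt{m}\,( U_m - \theta ) \;=\; \frac{4}{\sqrt{m}}\sum_{i=1}^m \begin{pmatrix} h_1(S_i) \\ g_1(S_i) \end{pmatrix} + o_P(1),
\]
where $U_m := (HSIC_m^{XY}, HSIC_m^{XZ})^{\top}$, because the higher-order terms in the decomposition have variance $O(m^{-2})$ and hence contribute $O_P(m^{-1/2})$ after multiplication by $\sqrt m$. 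The leading term is a normalized sum of i.i.d.\ mean-zero vectors, so the ordinary multivariate CLT yields joint asymptotic normality with covariance matrix $16\,\mathrm{Cov}\big((h_1(S_1), g_1(S_1))^{\top}\big)$. Identifying entries: the diagonal terms are $16\,\mathrm{Var}(h_1(S_1)) = \sigma^2_{XY}$ and $16\,\mathrm{Var}(g_1(S_1)) = \sigma^2_{XZ}$ (matching Theorem~\ref{thm:Asymp_dist_HSIC}, since $\mathbb{E}[\mathbb{E}_{j,q,r}h_{1jqr}] = HSIC(\mathcal{F},\mathcal{G},P_{xy})$), while the off-diagonal term is $\sigma_{XYXZ} = 16\,\mathrm{Cov}(h_1(S_1), g_1(S_1)) = 16\big(\mathbb{E}_i[(\mathbb{E}_{j,q,r}h_{ijqr})(\mathbb{E}_{j,q,r}g_{ijqr})] - HSIC(\mathcal{F},\mathcal{G},P_{xy})\,HSIC(\mathcal{F},\mathcal{H},P_{xz})\big)$.

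To justify the empirical estimate $\hat{\sigma}_{XYXZ}$, I would replace the population quantities in this last formula by sample analogues exactly as in the univariate case of Theorem~\ref{thm:Asymp_dist_HSIC}: the inner conditional expectation $\mathbb{E}_{j,q,r}h_{1jqr}$ by the incomplete average $(m-1)_3^{-1}\sum_{(j,q,r)\in i^m_3\setminus\{i\}} h_{ijqr}$ (likewise for $g$), the outer expectation over $S_i$ by $\frac1m\sum_{i=1}^m$, and the product of population HSICs by $HSIC_m^{XY}HSIC_m^{XZ}$; consistency of $\hat{\sigma}_{XYXZ}$ then follows from the law of large numbers together with Slutsky's theorem and the continuous mapping theorem.

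The CLT invocation is essentially a black box once $\Phi$ is identified; the parts needing genuine care are (i) checking that the first-order projections do not vanish, i.e.\ non-degeneracy, which holds under $\mathcal{H}_1$ because source and targets are not independent, so the limit is honestly bivariate Gaussian rather than a degenerate quadratic-form limit; and (ii) verifying that the specific incomplete-$U$-statistic average $R_{XYXZ}$ in \eqref{eq:jointasymptR} is asymptotically unbiased and consistent for $\mathbb{E}_i[(\mathbb{E}_{j,q,r}h_{ijqr})(\mathbb{E}_{j,q,r}g_{ijqr})]$, which requires tracking the combinatorial normalizations $(m-1)_3$ and controlling the cross terms from tuples that share indices. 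That bookkeeping is where I expect the main effort to lie.
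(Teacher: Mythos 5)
Your proposal is correct and follows essentially the same route as the paper, which simply invokes Hoeffding's Theorem 7.1 for the joint asymptotic normality of $U$-statistics and Serfling's Chapter 5 variance formula (with one variable fixed) to obtain $\sigma_{XYXZ}$ and its empirical estimate. You have merely filled in the details of the vector-valued kernel, the first-order projection, and the plug-in estimator that the paper's two-sentence proof leaves implicit.
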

\begin{proof} 
Eq. ~\eqref{eq:jointasymptR} is constructed with the definition of variance of a U-statistic as given by Serfling, Ch.\ 5 \yrcite{Serfling1981}, where one variable is fixed. Eq.~\eqref{eq:joint_HSIC} follows from the application of Hoeffding, Theorem 7.1 \yrcite{hoeffding1963probability},  which gives the joint asymptotic distribution of U-statistics.
\end{proof}
Based on the joint asymptotic distribution of HSIC described in Theorem~\ref{thm:joint_asympt_dist}, we can now describe a statistical test to solve Problem~\ref{pb:mainpb}: given a sample $\mathcal{S}_1$ as described in Section~\ref{joint_asympt_dist}, $\mathcal{T}(\mathcal{S}_1) : \left\lbrace  (\mathcal{X} \times \mathcal{Y} \times \mathcal{Z})^m \right\rbrace \rightarrow \left\lbrace 0,1 \right\rbrace  $ is used to test the null hypothesis $\mathcal{H}_0$ : $HSIC(\mathcal{F}, \mathcal{G}, P_{xy}) \leq HSIC(\mathcal{F}, \mathcal{H}, P_{xz})$ versus the alternative hypothesis $\mathcal{H}_1$ : $HSIC(\mathcal{F}, \mathcal{G}, P_{xy}) > HSIC(\mathcal{F}, \mathcal{H}, P_{xz})$ at a given significance level $\alpha$.  This is achieved by 
projecting the distribution to 1D using the statistic $HSIC_m^{XY} - HSIC_m^{XZ}$, 
and determining where the statistic falls relative to
a conservative estimate of the
the $1-\alpha$ quantile of the null.
We now derive this conservative estimate.  
A simple way of achieving this is to
rotate the distribution by $\frac{\pi}{4}$ counter-clockwise about the origin, and to integrate the resulting distribution projected
onto the first axis (cf.\ Fig.~\ref{fig:empirical_HSIC_differentsamplesize}).
 Denote
the asymptotically normal distribution of $\Red{\sqrt{m}}[HSIC_m^{XY} HSIC_m^{XZ}]^T$  as $\mathcal{N}(\mu,\Sigma)$. 
The distribution resulting from rotation and projection is
\begin{align}
\label{eq:AsymptoticDistributionKeepAll}
\mathcal{N} &\left( [Q \mu]_{1}, [Q \Sigma Q^{T}]_{11} \right),
\end{align}
where $Q = \dfrac{\sqrt{2}}{2}\begin{pmatrix}
1 & -1 \\ 
1 & 1
\end{pmatrix}$ is the rotation matrix by $\frac{\pi}{4}$ and
\begin{align}
&[Q \mu]_{1} = \frac{\sqrt{2}}{2} \left( 
\Red{
HSIC(\mathcal{F}, \mathcal{G}, P_{xy}) - HSIC(\mathcal{F}, \mathcal{H}, P_{xz})
} 
\right), \\
&[Q \Sigma Q^{T}]_{11} = \frac{1}{2}(\sigma_{XY}^2 + \sigma_{XZ}^2 - 2 \sigma_{XYXZ}).
\label{eq:var_AsymptoticDistributionKeepAll}
\end{align}
Following the empirical distribution from Eq.~\eqref{eq:AsymptoticDistributionKeepAll},
a test with statistic $HSIC_m^{XY} - HSIC_m^{XZ}$ has p-value
\begin{equation}
p \le 1 - \mathbf{\Phi}\left( \frac{ ( HSIC_m^{XY} - HSIC_m^{XZ} )}{\sqrt{ \sigma^2_{XY} + \sigma^2_{XZ} - 2 \sigma_{XYXZ} }} \right),
\label{eq:pvalue}
\end{equation}
where $\mathbf{\Phi}$ is the CDF of a standard normal distribution, and we have made the most conservative possible assumption that $HSIC(\mathcal{F}, \mathcal{G}, P_{xy}) - HSIC(\mathcal{F}, \mathcal{H}, P_{xz})=0$ under the null (the null also allows for the difference in population dependence measures to be negative).

To implement the test in practice, the variances of $\sigma_{XY}^2,\sigma_{XZ}^2$ and $\sigma_{XYXZ}^2$ may be replaced by their empirical estimates. The test will still be consistent for a large enough sample size, since the estimates will be sufficiently well converged to ensure the test is  calibrated.
Eq.~\eqref{eq:jointasymptR} is expensive to compute na\"{i}vely, because even computing the kernels $h_{ijqr}$ and $g_{ijqr}$ of the $U$-statistic itself is a non trivial task.  Following~\citep[Section 2.5]{SonSmoGreBedetal12}, we first form a vector $\mathbf{h_{XY}}$ with entries corresponding to $\sum_{(j,q,r) \in i^m_{3} \backslash \left\lbrace i \right\rbrace } h_{ijqr}$, and a vector $\mathbf{h_{XZ}}$ with entries corresponding to $\sum_{(j,q,r) \in i^m_{3} \backslash \left\lbrace i \right\rbrace } g_{ijqr} $.  Collecting terms in Eq.~\eqref{eq:Ustat_HSIC2} related to kernel matrices $\tilde{\mathbf{K}}$ and $\tilde{\mathbf{L}}$, $\mathbf{h_{XY}}$  
can be written as
\begin{align}\label{eq:compute_hxy}
\mathbf{h_{XY}} &= (m-2)^2 \left( \tilde{\mathbf{K}} \odot \tilde{\mathbf{L}} \right)  \mathds{1} 
- m (\tilde{\mathbf{K}} \mathds{1}) \odot (\tilde{\mathbf{L}} \mathds{1}) \\
&+ (m-2) \left( (\operatorname{Tr}(\tilde{\mathbf{K}}  \tilde{\mathbf{L}})) \mathds{1} - \tilde{\mathbf{K}} (\tilde{\mathbf{L}} \mathds{1}) - \tilde{\mathbf{L}} (\tilde{\mathbf{K}} \mathds{1}) \right) \nonumber \\
&+ (\mathds{1}^T \tilde{\mathbf{L}} \mathds{1})\tilde{\mathbf{K}} \mathds{1} + (\mathds{1}^T \tilde{\mathbf{K}} \mathds{1})\tilde{\mathbf{L}} \mathds{1} - ((\mathds{1}^T \tilde{\mathbf{K}}) (\tilde{\mathbf{L}} \mathds{1}))\mathds{1} \nonumber
\end{align}
where $\odot$ denotes the Hadamard product. 
Then $R_{XYXZ}$ in Eq.~\eqref{eq:jointasymptR} can be computed as $R_{XYXZ} = (4m)^{-1}(m-1)_3^{-2} \mathbf{h_{XY}}^T \mathbf{h_{XZ}}$.  Using the order of operations implied by the parentheses in Eq.~\eqref{eq:compute_hxy}, the computational cost of the cross covariance term is  $\mathcal{O}(m^2)$.  Combining this with the unbiased estimator of HSIC in Eq. \eqref{eq:Unbiased_HSIC} leads to a final computational complexity of $\mathcal{O}(m^2)$.

In addition to the asymptotic consistency result, we provide a finite sample bound on the deviation between the difference of two population HSIC statistics and the difference of two empirical HSIC estimates.  
\begin{Thm}[Generalization bound on the difference of empirical HSIC statistics]
Assume that $k$, $l$, and $d$ are bounded almost everywhere by 1, and are non-negative. Then for $m>1$ and all $\delta > 0$ with probability at least $1-\delta$, for all $p_{xyz}$, the generalization bound on the difference of empirical HSIC statistics is 
\begin{align}
&| \left\lbrace HSIC(\mathcal{F},\mathcal{G},P_{xy}) - HSIC(\mathcal{F},\mathcal{H},P_{xz})\right\rbrace  \nonumber \\
& \hspace{3cm} - \left\lbrace HSIC_m^{XY} - HSIC_m^{XZ} \right\rbrace | \nonumber \\
& \leq 
2 \left\lbrace \sqrt{\frac{log(6/\delta)}{\alpha^2 m}} + \frac{C}{m} \right\rbrace 
\label{eq:HSICreldepGeneralizationBound}
\end{align}
where $\alpha>0.24$ and $C$ are constants.
\end{Thm}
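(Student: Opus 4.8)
The plan is to reduce this two-statistic deviation to the single-HSIC finite-sample deviation bound of \citet{GreBouSmoSch05} by exploiting linearity, so that the asserted inequality comes out as literally twice the single-variable bound. The key observation is that both the population quantity of Definition~\ref{def:HSIC_crosscov} and the unbiased estimator of Theorem~\ref{thm:Unbiased_HSIC} are \emph{linear} in their second kernel --- in the kernel $l$ for the population expression, and in the matrix $\tilde{\mathbf{L}}$ for the empirical one. Since $P_{xy}$ and $P_{xz}$ are marginals of the common $P_{xyz}$ and share the source marginal $P_x$, a short direct computation shows that $HSIC(\mathcal{F},\mathcal{G},P_{xy}) - HSIC(\mathcal{F},\mathcal{H},P_{xz})$ equals the HSIC functional of $X$ against $(Y,Z)$ with source kernel $k$ and with the signed symmetric function $\phi\big((y,z),(y',z')\big) := l(y,y') - d(z,z')$ in the role of the target kernel, while $HSIC_m^{XY} - HSIC_m^{XZ}$ equals the corresponding empirical estimator of Theorem~\ref{thm:Unbiased_HSIC} with $\tilde{\mathbf{L}}$ replaced by $\tilde{\mathbf{L}} - \tilde{\mathbf{D}}$. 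Under the hypotheses $0 \le k,l,d \le 1$ almost everywhere we have $|\phi| \le 1$ almost everywhere, so $\phi$ is bounded but now ranges over an interval of length at most $2$ rather than $1$.

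Next I would rerun the proof of the single-variable deviation bound of \citet{GreBouSmoSch05} with $k$ and this signed kernel $\phi$ in place of $k$ and $l$. That proof (a) writes the estimator as a sum of a small number of $U$-statistics (the trace, product, and mixed terms of Eq.~\eqref{eq:Unbiased_HSIC}), (b) absorbs into the term $C/m$ the $\mathcal{O}(1/m)$ gap between those $U$-statistics and the normalisation actually used, and (c) applies a Hoeffding/McDiarmid large-deviation inequality to each $U$-statistic at level $\delta/3$ (two-sided, whence the $\log(6/\delta)$) and combines the results by the triangle inequality. Crucially, steps (a)--(c) use the second argument only through its supremum norm; positive definiteness and non-negativity of the second kernel are never invoked at the concentration step. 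Passing from $l \in [0,1]$ to $\phi \in [-1,1]$ thus leaves the whole structure intact and merely multiplies every bounded-difference coefficient, and the $\mathcal{O}(1/m)$ gap, by the range-doubling factor $2$; pulling that factor to the front yields, with probability at least $1-\delta$ over $\mathcal{S}_1 \sim P_{xyz}^{\,m}$,
\begin{align}
&\big| \{ HSIC(\mathcal{F},\mathcal{G},P_{xy}) - HSIC(\mathcal{F},\mathcal{H},P_{xz}) \} - \{ HSIC_m^{XY} - HSIC_m^{XZ} \} \big| \nonumber \\
&\qquad \le 2\left( \sqrt{\frac{\log(6/\delta)}{\alpha^2 m}} + \frac{C}{m} \right),
\end{align}
with the same numerical constants $\alpha > 0.24$ and $C$ as in the single-variable statement.

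The main obstacle is exactly this bookkeeping inside the cited proof: one must check that its concentration step depends on the second kernel only through its supremum norm --- so that the non-negativity hypothesis, used there only cosmetically, may be dropped for the signed $\phi$ --- and that the substitution of $\phi$ for $l$ scales precisely the relevant constants by $2$ while leaving $\alpha$ and $C$ unchanged. If one prefers to treat the single-HSIC bound as a black box, a strictly more elementary route gives essentially the same conclusion with a slightly larger logarithmic constant: by the triangle inequality $\big|\{\cdot\}_{\mathrm{pop}} - \{\cdot\}_m\big| \le |HSIC(\mathcal{F},\mathcal{G},P_{xy}) - HSIC_m^{XY}| + |HSIC(\mathcal{F},\mathcal{H},P_{xz}) - HSIC_m^{XZ}|$, one applies \citet{GreBouSmoSch05} to each summand at confidence $1-\delta/2$ and takes a union bound, obtaining the same expression with $\log(12/\delta)$ in place of $\log(6/\delta)$.
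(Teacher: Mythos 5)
Your proposal is correct, and your primary argument takes a genuinely different (and in fact sharper) route than the paper's. The paper's own proof is precisely your fallback route at the end: it invokes the single-HSIC finite-sample bound of \citet{GreBouSmoSch05} for each of $HSIC_m^{XY}$ and $HSIC_m^{XZ}$, combines them with the triangle inequality, and takes a union bound over the two deviation events. As you correctly observe, carried out honestly this gives $\log(12/\delta)$ rather than the stated $\log(6/\delta)$ (a discrepancy the paper's one-line proof glosses over, presumably to be absorbed into the unspecified constants). Your main route --- using linearity of both the population HSIC and the unbiased estimator of Theorem~\ref{thm:Unbiased_HSIC} in the second kernel to rewrite the difference of the two statistics as a single HSIC-type quantity with the signed kernel $\phi = l - d$ of range $2$, then rerunning the concentration argument --- requires the extra bookkeeping you flag, namely checking that non-negativity of the target kernel enters the cited proof only through the supremum norm (it does: the Hoeffding inequality for $U$-statistics used there depends only on the range of the $U$-statistic kernel, so the three per-term deviations each scale by the range-doubling factor $2$). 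What that extra work buys is the theorem exactly as stated: the leading factor of $2$ emerges from the doubled kernel range while $\log(6/\delta)$, $\alpha$, and $C$ are preserved. Either route establishes the claim; yours additionally justifies the precise constant that the paper's union-bound sketch does not.
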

\begin{proof}
In Gretton et al.,~\yrcite{GreBouSmoSch05} a finite sample bound is given for a single HSIC statistic. Eq.~\eqref{eq:HSICreldepGeneralizationBound} is proved by using a union bound.  
\end{proof}
\begin{corollary}
$HSIC_m^{XY} - HSIC_m^{XZ}$ converges to the population statistic at rate $\mathcal{O}(\sqrt{m})$.
\end{corollary}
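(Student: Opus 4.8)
The plan is to read the claim off directly from the finite-sample bound in the preceding theorem, Eq.~\eqref{eq:HSICreldepGeneralizationBound}. Fix any $\delta \in (0,1)$. That bound states that, with probability at least $1-\delta$,
\[
\bigl| \{ HSIC(\mathcal{F},\mathcal{G},P_{xy}) - HSIC(\mathcal{F},\mathcal{H},P_{xz}) \} - \{ HSIC_m^{XY} - HSIC_m^{XZ} \} \bigr| \;\le\; 2\Bigl( \sqrt{\tfrac{\log(6/\delta)}{\alpha^2 m}} + \tfrac{C}{m} \Bigr).
\]
First I would observe that, for $m$ large and $\delta$ fixed, the term $C/m$ is of strictly lower order than $\sqrt{\log(6/\delta)/(\alpha^2 m)} = \Theta(m^{-1/2})$, so the whole right-hand side is $\mathcal{O}(m^{-1/2})$. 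Multiplying through by $\sqrt m$ gives $\sqrt m\,\bigl| (\text{population difference}) - (\text{empirical difference}) \bigr| \le 2\bigl( \sqrt{\log(6/\delta)}/\alpha + C/\sqrt m \bigr)$, whose right-hand side stays bounded (in fact tends to $2\sqrt{\log(6/\delta)}/\alpha$) with probability $1-\delta$. Since $\delta$ is arbitrary, this is exactly the assertion that the rescaled error $\sqrt m\bigl[ (HSIC_m^{XY} - HSIC_m^{XZ}) - (HSIC(\mathcal{F},\mathcal{G},P_{xy}) - HSIC(\mathcal{F},\mathcal{H},P_{xz})) \bigr]$ is bounded in probability, i.e.\ the estimator converges to the population statistic at rate $\mathcal{O}(\sqrt m)$.

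As an alternative route yielding a sharper, distributional statement, I would instead invoke Theorem~\ref{thm:joint_asympt_dist}: applying the continuous mapping theorem to the linear functional $(a,b)\mapsto a-b$ on the joint central limit theorem proved there gives
\[
\sqrt m\bigl[ (HSIC_m^{XY} - HSIC_m^{XZ}) - (HSIC(\mathcal{F},\mathcal{G},P_{xy}) - HSIC(\mathcal{F},\mathcal{H},P_{xz})) \bigr] \overset{d}{\longrightarrow} \mathcal{N}\bigl(0,\ \sigma_{XY}^2 + \sigma_{XZ}^2 - 2\sigma_{XYXZ}\bigr).
\]
A sequence converging in distribution is tight, hence $O_p(1)$, which again delivers the $\mathcal{O}(\sqrt m)$ rate, now with the explicit limiting variance $\sigma_{XY}^2 + \sigma_{XZ}^2 - 2\sigma_{XYXZ}$ already appearing in Eq.~\eqref{eq:var_AsymptoticDistributionKeepAll}.

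There is essentially no hard step: the corollary is an immediate consequence of the two results just established. The only point needing care is the reading of ``rate $\mathcal{O}(\sqrt m)$'' as $\sqrt m$-consistency, i.e.\ error $O_p(m^{-1/2})$, and the observation that the additive $C/m$ correction in Eq.~\eqref{eq:HSICreldepGeneralizationBound} is asymptotically negligible against the leading $m^{-1/2}$ term and so does not affect the rate. I would also note the hypotheses: the first argument needs only boundedness and non-negativity of $k,l,d$ and holds uniformly over $p_{xyz}$, whereas the second additionally uses the finite second-moment conditions $\mathbb{E}[h^2]<\infty$, $\mathbb{E}[g^2]<\infty$ of Theorem~\ref{thm:joint_asympt_dist}.
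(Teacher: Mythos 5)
Your primary argument is exactly the route the paper intends: the corollary is stated immediately after the generalization bound of Eq.~\eqref{eq:HSICreldepGeneralizationBound}, from which the $O_p(m^{-1/2})$ error (i.e.\ $\sqrt{m}$-consistency) is read off just as you do, with the $C/m$ term correctly dismissed as lower order. Your alternative derivation via Theorem~\ref{thm:joint_asympt_dist} and the continuous mapping theorem is a valid bonus that additionally identifies the limiting variance, but it is not needed and does not change the conclusion.
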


\subsection{A simple consistent test via uncorrelated HSICs}\label{subsec:consis_appro}

From the result in Eq.~\eqref{eq:Asymp_dist_HSIC}, a simple, consistent  test of relative dependence can be constructed as follows: split the samples from $P_x$ into two equal sized sets denoted by $X'$ and $X''$, and drop the second half of  the sample pairs with $Y$ 
and the first half of the sample pairs with $Z$. 
We will denote the remaining samples as $Y'$ and $Z''$.  We can now estimate the joint distribution of $\Red{\sqrt{m}}[HSIC_{m/2}^{X'Y'}, HSIC_{m/2}^{X''Z''}]^T$ as 
\begin{equation}
\mathcal{N} \left( 
\Red{
\begin{pmatrix} 
HSIC(\mathcal{F}, \mathcal{G}, P_{xy}) \\ 
HSIC(\mathcal{F}, \mathcal{H}, P_{xz}) \end{pmatrix}
}, 
\begin{pmatrix} 
\sigma_{X'Y'}^{2} & 0 \\ 
0 & \sigma_{X''Z''}^2 
\end{pmatrix} 
\right), 
\end{equation}
which we will write as $\mathcal{N} \left( \mu' , \Sigma' \right)$.  Given this joint distribution, we need to 
determine the distribution over the half space defined by 
\Red{
$HSIC(\mathcal{F}, \mathcal{G}, P_{xy}) < HSIC(\mathcal{F}, \mathcal{H}, P_{xz}).$
}
As in the previous section, we  achieve this by rotating the distribution by $\frac{\pi}{4}$  counter-clockwise about the origin, and integrating the resulting distribution projected onto the first axis (cf.\ Fig.~\ref{fig:empirical_HSIC_differentsamplesize}).  The resulting projection of the rotated distribution onto the primary axis is 
\begin{align}\label{eq:AsymptoticDistributionDropHalf}
\mathcal{N} &\left( \left[ Q \mu' \right]_{1}, \left[Q \Sigma' Q^{T} \right]_{11} \right)
\end{align}
where 
\begin{align}
&[ Q \mu']_{1} = \frac{\sqrt{2}}{2} \left( 
\Red{HSIC(\mathcal{F}, \mathcal{G}, P_{xy}) - HSIC(\mathcal{F}, \mathcal{H}, P_{xz})} \right),  \\
&[Q \Sigma' Q^{T}]_{11} = \frac{1}{2}(\sigma_{X'Y'}^2 + \sigma_{X''Z''}^2).
\label{eq:var_AsymptoticDistributionDropHalf}
\end{align}
From this empirically estimated distribution, it is  straightforward to construct a consistent test (cf. Eq. \eqref{eq:pvalue}).  The power of this test varies inversely with the variance of the distribution in Eq. \eqref{eq:AsymptoticDistributionDropHalf}.  

\subsection{The dependent test is more powerful}\label{subsec:lower_variance}

While discarding half the samples leads to a consistent test,
we might expect some loss of power over the approach in Section \ref{joint_asympt_dist}, due to the increase in variance with lower sample size.
In this section, we prove the Section \ref{joint_asympt_dist} test is more powerful than that
of Section \ref{subsec:consis_appro}, regardless of $P_{xy}$ and $P_{xz}$.

We call the simple and consistent approach in Section~\ref{subsec:consis_appro}, the \emph{independent approach}, and the lower variance approach in Section~\ref{joint_asympt_dist}, the \emph{dependent approach}. The following theorem compares these approaches.
\begin{Thm}
The asymptotic relative efficiency (ARE) of the independent approach relative to the dependent approach is always greater than 1.
\label{thm:powerful_test}
\end{Thm}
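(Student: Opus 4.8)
The plan is to reduce the statement to an elementary inequality between the asymptotic variances of the two one‑dimensional test statistics. Both tests of Problem~\ref{pb:mainpb} reject for large values of a rotated‑then‑projected Gaussian statistic, compared against the standard normal quantile $z_{1-\alpha}$ under the conservative null mean $0$ (Eq.~\eqref{eq:pvalue}). At any fixed alternative, write $\Delta := HSIC(\mathcal{F},\mathcal{G},P_{xy}) - HSIC(\mathcal{F},\mathcal{H},P_{xz}) > 0$; the asymptotic power of such a test with (consistently estimated) variance $V/m$ is $\Phi\!\left(\sqrt{m}\,\Delta/\sqrt{V} - z_{1-\alpha}\right)$, the common rotation factor $\tfrac{\sqrt 2}{2}$ cancelling between numerator and denominator. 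This is strictly decreasing in $V$, so the asymptotic relative efficiency of the independent approach relative to the dependent one is the ratio $\mathrm{ARE} = V_{\mathrm{ind}}/V_{\mathrm{dep}}$ (equivalently, the factor by which the independent test's sample size must grow to match the dependent test's power), and it suffices to prove this ratio exceeds $1$.

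Next I would pin down the two variances. For the dependent test, Theorem~\ref{thm:joint_asympt_dist} and Eq.~\eqref{eq:var_AsymptoticDistributionKeepAll} give $V_{\mathrm{dep}} \propto \sigma_{XY}^2 + \sigma_{XZ}^2 - 2\sigma_{XYXZ}$. For the independent test, each HSIC is computed on a disjoint half of the data, which inflates its asymptotic variance by the factor $2$ (halving the sample size doubles $1/m$), so $\sigma^2_{X'Y'} \to 2\sigma^2_{XY}$ and $\sigma^2_{X''Z''} \to 2\sigma^2_{XZ}$ — the population limits coincide with those of the full‑sample estimators, all draws being i.i.d.\ — while the cross term vanishes because $(X',Y')$ is independent of $(X'',Z'')$. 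Hence, from Eq.~\eqref{eq:var_AsymptoticDistributionDropHalf}, $V_{\mathrm{ind}} \propto 2\sigma_{XY}^2 + 2\sigma_{XZ}^2$ with the same proportionality constant.

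Finally I would establish $V_{\mathrm{ind}} > V_{\mathrm{dep}}$, i.e.\ $\sigma_{XY}^2 + \sigma_{XZ}^2 + 2\sigma_{XYXZ} > 0$. Writing $h_1,g_1$ for the first‑order projections of the two $U$‑statistic kernels, this quantity equals $16\,\mathrm{Var}(h_1+g_1)\ge 0$; equivalently, apply Cauchy–Schwarz, $\sigma_{XYXZ}\ge -\sigma_{XY}\sigma_{XZ}$, together with AM–GM, $\sigma_{XY}^2+\sigma_{XZ}^2\ge 2\sigma_{XY}\sigma_{XZ}$, to get
\[ \mathrm{ARE} \;=\; \frac{2(\sigma_{XY}^2+\sigma_{XZ}^2)}{\sigma_{XY}^2+\sigma_{XZ}^2-2\sigma_{XYXZ}} \;\ge\; \frac{2(\sigma_{XY}^2+\sigma_{XZ}^2)}{(\sigma_{XY}+\sigma_{XZ})^2} \;\ge\; 1, \]
the last step being $2(a^2+b^2)\ge(a+b)^2$. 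The standing hypothesis $HSIC>0$ makes the $U$‑statistics non‑degenerate, so $\sigma_{XY}^2,\sigma_{XZ}^2>0$, and both inequalities are tight only in the degenerate case $\sigma_{XY}=\sigma_{XZ}$ with $h_1=-g_1$ almost surely; excluding that, the ARE is strictly greater than $1$.

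The step I expect to be the main obstacle is the first reduction: one has to argue cleanly that ``strictly more powerful at every fixed alternative'' is faithfully captured by the variance ratio. This requires the two tests to be asymptotically calibrated (the conservative $z_{1-\alpha}$ threshold, justified earlier via consistency of the variance estimators) and the alternative to be fixed so that $\sqrt{m}\,\Delta$ diverges at the same rate for both constructions — either a Pitman local‑alternative argument or an exact‑slope argument makes this precise. A secondary technical point to handle carefully is the factor‑of‑two variance inflation and the vanishing cross‑covariance of the split‑sample construction; once those are in place the remaining inequality is elementary, and the only caveat worth flagging is the (generically vacuous) equality case above.
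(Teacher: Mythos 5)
Your proposal is correct and follows essentially the same route as the paper: you establish the factor-of-two variance inflation $\sigma^2_{X'Y'}=2\sigma^2_{XY}$ from the halved sample, reduce the power comparison to comparing $2(\sigma_{XY}^2+\sigma_{XZ}^2)$ against $\sigma_{XY}^2+\sigma_{XZ}^2-2\sigma_{XYXZ}$ via the Type II error formula, and close the gap with the positive semidefiniteness of $\Sigma$ (the paper's Lemma~\ref{lem:lowerVariance}), which is exactly your Cauchy--Schwarz/AM--GM step. Your writeup is in fact slightly more careful than the paper's, since you make the ARE-equals-variance-ratio reduction explicit and flag the degenerate equality case that the paper's claim of strict inequality silently ignores.
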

\begin{remark} The \textit{asymptotic relative efficiency} (ARE) is defined in e.g.~\citet[Chap.5, Section 1.15.4]{Serfling1981}.  If $m_A$ and $m_B$ are the sample sizes at which tests "perform equivalently" (i.e. have equal power), then the ratio $\frac{m_A}{m_B}$ represents the relative efficiency.  When $m_A$ and $m_B$ tend to $+\infty$ and the ratio $\frac{m_A}{m_B} \rightarrow L$ (at equivalent performance), then the value $L$ represents the asymptotic relative efficiency of procedure B relative to procedure A. This example is relevant to our case since we are comparing two test statistics with different asymptotically Normal distributions.
\end{remark}
The following lemma is used for the proof of  Theorem \ref{thm:powerful_test}.
\begin{Lem}(Lower Variance) \label{lem:lowerVariance} 
The variance of the dependent test statistic is smaller than the variance of the independent test statistic.
\end{Lem}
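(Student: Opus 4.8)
\textbf{Proof proposal for Lemma~\ref{lem:lowerVariance}.}
The plan is to compare the two one-dimensional null distributions onto which we project, namely Eq.~\eqref{eq:var_AsymptoticDistributionKeepAll} for the dependent test and Eq.~\eqref{eq:var_AsymptoticDistributionDropHalf} for the independent test, and show the former has the smaller variance. The key observation is that both projected statistics target the \emph{same} population quantity $HSIC(\mathcal{F},\mathcal{G},P_{xy})-HSIC(\mathcal{F},\mathcal{H},P_{xz})$, so comparing power reduces to comparing variances of the $\sqrt{m}$-normalized limiting Gaussians. First I would write down the two variances on a common footing: the dependent test (using all $m$ samples) has projected variance proportional to $\tfrac12(\sigma_{XY}^2+\sigma_{XZ}^2-2\sigma_{XYXZ})$, while the independent test uses only $m/2$ samples in each marginal estimate, so after rescaling to a common sample size $m$ its projected variance is proportional to $\tfrac12\big(2\sigma_{XY}^2+2\sigma_{XZ}^2\big)=\sigma_{XY}^2+\sigma_{XZ}^2$ — the factor of $2$ coming directly from halving the sample size in Theorem~\ref{thm:Asymp_dist_HSIC} (recall $\sqrt{m/2}(HSIC_{m/2}-HSIC)\rightsquigarrow\mathcal{N}(0,\sigma^2)$ means the variance of $\sqrt{m}\,HSIC_{m/2}$ is $2\sigma^2$), together with the fact that $\sigma_{X'Y'}^2=\sigma_{XY}^2$ and $\sigma_{X''Z''}^2=\sigma_{XZ}^2$ because the subsampled estimators are computed from i.i.d.\ draws of the same distributions.

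Next I would reduce the claimed inequality $\tfrac12(\sigma_{XY}^2+\sigma_{XZ}^2-2\sigma_{XYXZ}) \le \sigma_{XY}^2+\sigma_{XZ}^2$ to the statement
\[
\sigma_{XY}^2+\sigma_{XZ}^2+2\sigma_{XYXZ}\ \ge\ 0 .
\]
This is precisely the variance of the sum $HSIC_m^{XY}+HSIC_m^{XZ}$ under the joint asymptotic distribution of Theorem~\ref{thm:joint_asympt_dist}, i.e.\ $[Q'\Sigma Q'^T]$ for the $+\tfrac{\pi}{4}$ companion rotation, hence it is a variance and automatically non-negative; equivalently, the $2\times 2$ asymptotic covariance matrix $\Sigma=\begin{pmatrix}\sigma_{XY}^2 & \sigma_{XYXZ}\\ \sigma_{XYXZ} & \sigma_{XZ}^2\end{pmatrix}$ is positive semidefinite (being a limit of covariance matrices), so $(1,1)\Sigma(1,1)^T\ge 0$. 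This gives the inequality, and it is strict whenever $\Sigma$ is positive definite, which holds generically (in particular whenever $\sigma_{XYXZ}^2 < \sigma_{XY}^2\sigma_{XZ}^2$).

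The main obstacle is not the inequality itself but the \emph{bookkeeping of sample sizes}: one has to be scrupulous that the two tests are being compared at the same nominal $m$, so that the extra factor of $2$ on the independent side is correctly attributed to the halving of data rather than smuggled in by a different normalization convention — the excerpt's use of $\sqrt{m}$ (not $\sqrt{m/2}$) in front of $[HSIC_{m/2}^{X'Y'},HSIC_{m/2}^{X''Z''}]^T$ in Section~\ref{subsec:consis_appro} is exactly what makes this transparent, and I would make that step explicit. A secondary point is to confirm $\sigma_{X'Y'}^2 = \sigma_{XY}^2$: this follows because $\sigma_{XY}^2$ in Theorem~\ref{thm:Asymp_dist_HSIC} is a functional of the distribution $P_{xy}$ alone (through the conditional expectations of the U-statistic kernel $h$), not of the particular sample, and the subsample $(X',Y')$ is drawn i.i.d.\ from the same $P_{xy}$. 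Once these two points are nailed down, Lemma~\ref{lem:lowerVariance} follows immediately, and Theorem~\ref{thm:powerful_test} then follows by feeding the variance ratio into the standard ARE formula for comparing two asymptotically normal test statistics with a common mean parameter (Serfling, Ch.~5).
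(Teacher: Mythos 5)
Your proposal is correct and follows essentially the same route as the paper: identify the two projected variances $\tfrac12(\sigma_{XY}^2+\sigma_{XZ}^2-2\sigma_{XYXZ})$ and $\tfrac12(2\sigma_{XY}^2+2\sigma_{XZ}^2)$ under a common $\sqrt{m}$ scaling, and reduce the comparison to $\sigma_{XY}^2+\sigma_{XZ}^2+2\sigma_{XYXZ}\ge 0$, which holds by positive (semi)definiteness of $\Sigma$. Your treatment is in fact slightly more careful than the paper's on two points -- the explicit sample-size bookkeeping behind the factor of $2$ (which the paper attributes to convergence of moments in the CLT) and the distinction between semidefiniteness and the strict inequality -- but the underlying argument is the same.
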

\begin{proof} From the convergence of moments in the application of the central limit theorem~\citep{bahr1965}, we have that $\sigma_{X'Y'}^2 = 2 \sigma_{XY}^2$.  Then the variance summary in Eq.~\eqref{eq:var_AsymptoticDistributionKeepAll} is $\frac{1}{2}(\sigma_{XY}^2 + \sigma_{XZ}^2 - 2 \sigma_{XYXZ})$ and the variance summary in Equation~\eqref{eq:var_AsymptoticDistributionDropHalf} is $
\frac{1}{2}(2\sigma_{XY}^2 + 2\sigma_{XZ}^2)$ where in both cases the statistic is scaled by $\sqrt{m}$.  We have that the variance of the independent test statistic is smaller than the variance of the dependent test statistic when
\begin{align}
\frac{1}{2} (\sigma_{XY}^2 + \sigma_{XZ}^2 - 2 \sigma_{XYXZ}) &< \frac{1}{2} (2\sigma_{XY}^2 + 2\sigma_{XZ}^2) \nonumber \\
\Longleftrightarrow - 2 \sigma_{XYXZ} &< \sigma_{XY}^2 + \sigma_{XZ}^2
\end{align}
which is implied by the positive definiteness of $\Sigma$.
\end{proof}
\begin{proof}[Proof of Theorem~\ref{thm:powerful_test}]
The Type II error probability of the independent test at level $\alpha$ is
\begin{equation} \label{eq:indepTestPvalue}
  \Phi \left[\Phi^{-1}(1-\alpha) - \dfrac{
   \begin{matrix}  m^{-1/2}\big(  HSIC(\mathcal{F}, \mathcal{G}, P_{xy})  \\ \qquad- HSIC(\mathcal{F}, \mathcal{H}, P_{xz})\big)\end{matrix}}
    {\sqrt{\sigma^2_{X'Y'}+\sigma^2_{X''Z''}}} \right], 
\end{equation}
where we again make the most conservative possible assumption that $HSIC(\mathcal{F}, \mathcal{G}, P_{xy}) - HSIC(\mathcal{F}, \mathcal{H}, P_{xz})=0$ under the null. The Type II error probability of the dependent test at level $\alpha$ is
\begin{equation}
  \label{eq:depTestPvalue}
  \Phi \left[ \Phi^{-1}(1-\alpha) -
\dfrac{
   \begin{matrix}  m^{-1/2}\big(  HSIC(\mathcal{F}, \mathcal{G}, P_{xy})  \\ \qquad- HSIC(\mathcal{F}, \mathcal{H}, P_{xz})\big)\end{matrix}}
    {\sqrt{\sigma^2_{XY}+\sigma^2_{XZ}-2\sigma_{XYXZ}}} \right]
\end{equation}
where $\Phi$ is the CDF of the standard normal distribution. 
\Red{The numerator in Eq.~\eqref{eq:indepTestPvalue} is the same as the numerator in Eq.~\eqref{eq:depTestPvalue}, and the denominator in Eq.~\eqref{eq:depTestPvalue} is smaller due to Lemma~\ref{lem:lowerVariance}.  
  The lower variance dependent test therefore has higher ARE, i.e.,\ for a sufficient sample size $m > \tau$ for some distribution dependent $\tau \in \mathbb{N}_+$, the dependent test will be more powerful than the independent test.
}
\end{proof}


\section{Generalizing to more than two HSIC statistics}\label{gen_depend_test}

The generalization of the dependence test to more than three random variables follows from the earlier derivation  by applying successive rotations to a higher dimensional joint Gaussian distribution over multiple HSIC statistics.  We assume a sample $\mathcal{S}$ of size $m$ over $n$ domains with kernels $k_1, \dots, k_n$ associated uniquely with respective reproducing kernel Hilbert spaces $\mathcal{F}_1, \dots, \mathcal{F}_n$.  We define a generalized statistical test, $\mathcal{T}_g (\mathcal{S}) \rightarrow \lbrace 0,1 \rbrace$ to test the null hypothesis $\mathcal{H}_0$ : $\sum_{(x,y) \in \{1,\dots,n\}^2} v_{(x,y)}  HSIC(\mathcal{F}_x,\mathcal{F}_y,P_{xy}) \leq 0$ versus the alternative hypothesis $\mathcal{H}_m$ : $\sum_{(x,y) \in \{1,\dots,n\}^2} v_{(x,y)}  HSIC(\mathcal{F}_x,\mathcal{F}_y,P_{xy}) > 0$, where $v$ is a vector of weights on each HSIC statistic.  We may recover the test in the previous section by setting $v_{(1,2)}=+1$ $v_{(1,3)}=-1$ and $v_{(i,j)}=0$ for all $(i,j) \in \{1,2,3\}^2\setminus \{(1,2),(1,3)\}$.

The derivation of the test follows the general strategy used in the previous section: we construct a rotation matrix so as to project the joint Gaussian distribution onto the first axis, and read the $p$-value from a standard normal table.  To construct the rotation matrix, we simply need to rotate $v$ such that it is aligned with the first axis.  Such a rotation can be computed by composing $n$ 2-dimensional rotation matrices as in Algorithm~\ref{alg:generalization_algo}.
\begin{algorithm}[h!]
   \caption{Successive rotation for generalized high-dimensional relative tests of dependency (cf.\ Section~\ref{gen_depend_test})}
   \label{alg:generalization_algo}
\begin{algorithmic}
\REQUIRE $v \in \mathbb{R}^n$
\ENSURE $[Q v]_i = 0 \ \ \forall i \ne 1$, $Q^T Q = I$
\STATE $Q=I$
\FOR{$i=2$ {\bfseries to} $n$}
\STATE{$Q_i = I$; $\theta = - \tan^{-1} \frac{v_i}{[Qv]_1}$}
\STATE{$[Q_i]_{11} = \cos(\theta)$; $[Q_i]_{1i} = -\sin(\theta)$}
\STATE{$[Q_i]_{i1} = \sin(\theta)$; $[Q_i]_{ii} = \cos(\theta)$}
\STATE{$Q = Q_i Q$}
\ENDFOR
\end{algorithmic}
\end{algorithm}
%



\section{Experiments}\label{sec:experiments}

We apply our estimates of statistical dependence to three challenging problems. The first is a synthetic data experiment, in which we can directly control the relative degree of functional dependence between variates.  The second experiment uses a multilingual corpus to determine the relative relations between European languages.  The last experiment is a $3$-block dataset which combines gene expression, comparative genomic hybridization, and a qualitative phenotype measured on a sample of Glioma patients.


\subsection{Synthetic experiment}\label{sec:synthetic_experiments}
We constructed 3 distributions as defined in Eq.~\eqref{eq:synthetic_data} and illustrated in Figure \ref{fig:illustration_synt_data}.  
\begin{align}
\label{eq:synthetic_data}
\mbox{Let } t &\sim  \mathcal{U}[(0,2\pi)], \\
   (a)\ x_1 &\sim t + \gamma_1 \mathcal{N}(0,1) \hspace{0.3cm}
   y_1 \sim \sin(t) + \gamma_1 \mathcal{N}(0,1)& \nonumber \\
   (b)\ x_2 &\sim t \cos(t) + \gamma_2 \mathcal{N}(0,1) \hspace{0.3cm}
   y_2 \sim  t \sin(t) + \gamma_2 \mathcal{N}(0,1)& \nonumber \\
   (c)\ x_3 &\sim  t \cos(t) + \gamma_3 \mathcal{N}(0,1) \hspace{0.3cm}
   y_3 \sim  t \sin(t) + \gamma_3 \mathcal{N}(0,1)& \nonumber    
\end{align}

\begin{figure}
\begin{tabular}{rcrcrc}
   \scalebox{0.8}{\tiny \begin{sideways} $ \sin(t) + \gamma_1 \mathcal{N}(0,1)$ \end{sideways}} & \includegraphics[width=0.14\textwidth]{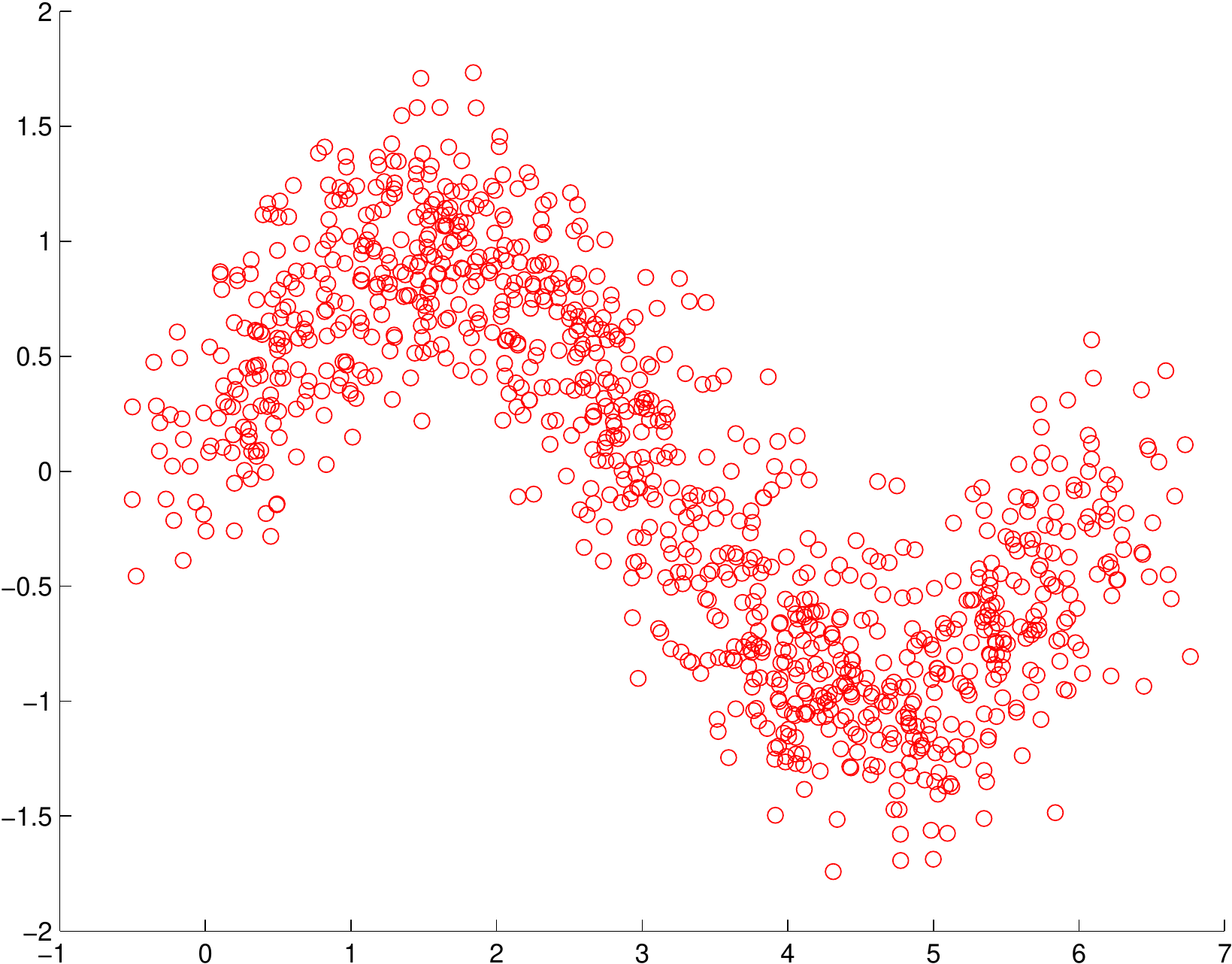} & \scalebox{0.8}{\tiny\begin{sideways}  $ t \sin(t) + \gamma_2 \mathcal{N}(0,1)$ \end{sideways}} &
   \includegraphics[width=0.14\textwidth]{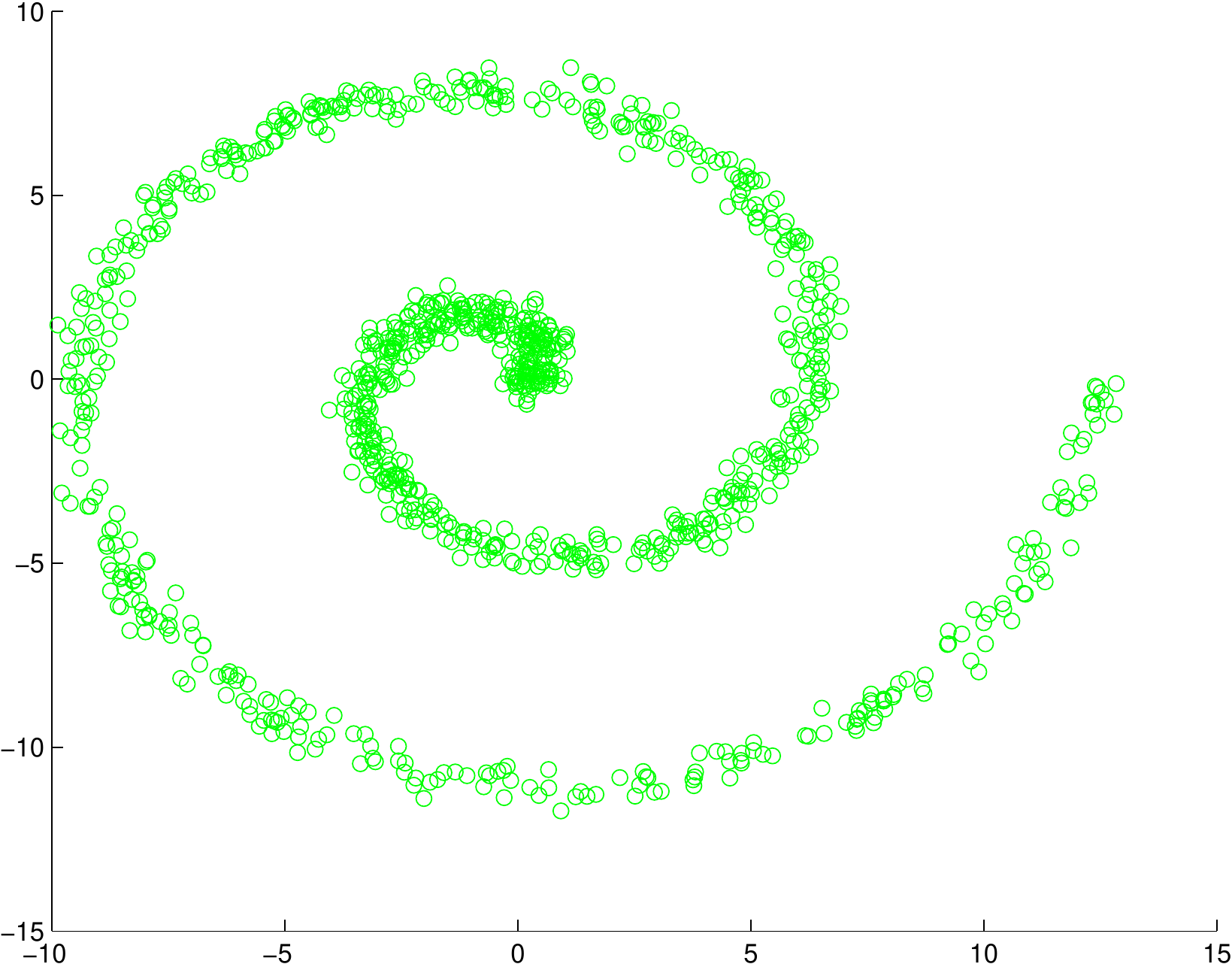} & \scalebox{0.8}{\tiny\begin{sideways}  $ t \cos(t) + \gamma_3 \mathcal{N}(0,1)$ \end{sideways}} &
   \includegraphics[width=0.14\textwidth]{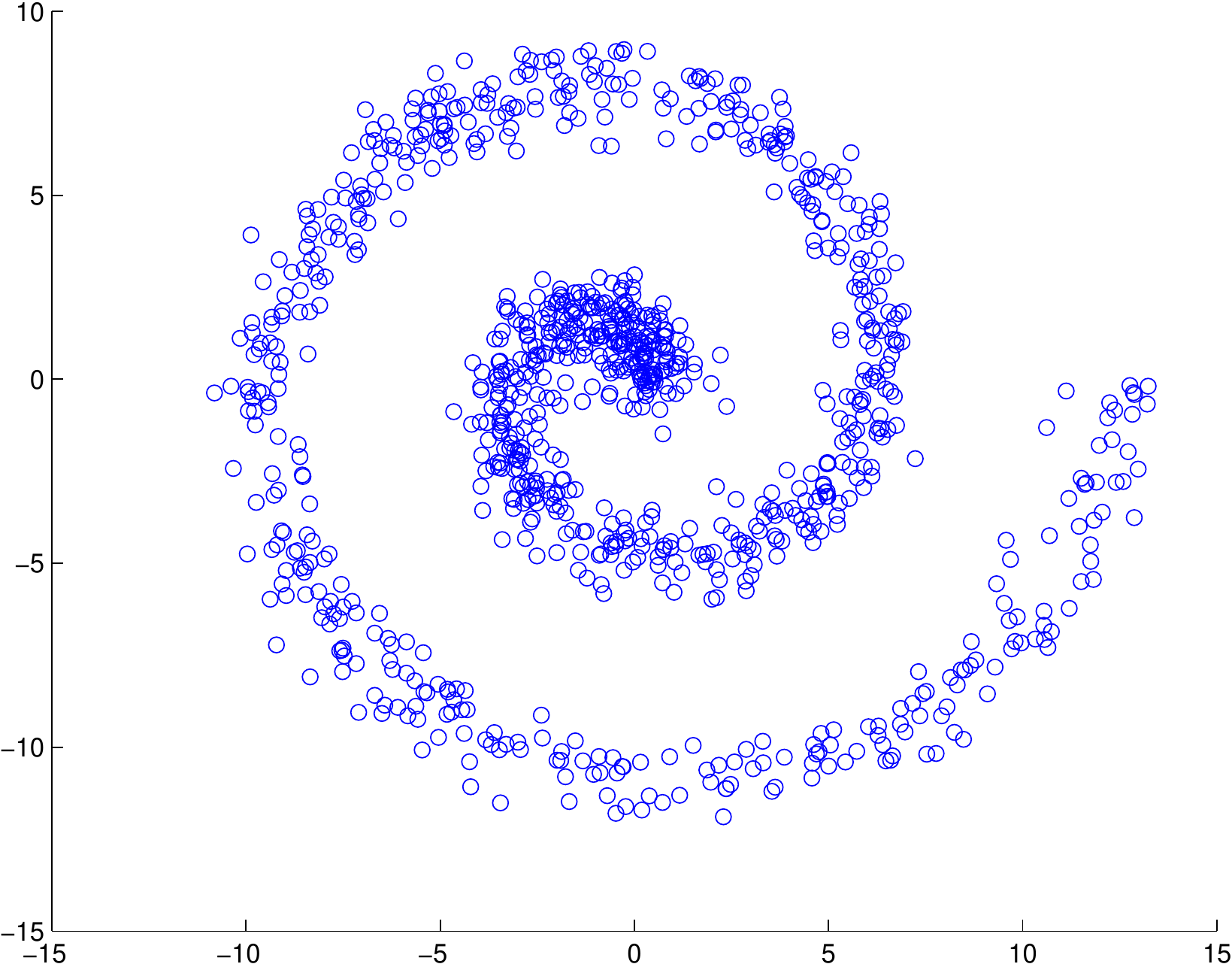} \\
   & \scalebox{0.8}{\tiny $t + \gamma_1 \mathcal{N}(0,1)$}&& \scalebox{0.8}{\tiny $t \cos(t) + \gamma_2 \mathcal{N}(0,1)$} && \scalebox{0.8}{\tiny $t \cos(t) + \gamma_3 \mathcal{N}(0,1)$} \\
  & (a) $\gamma_1 = 0.3$ && (b) $\gamma_2 = 0.3$ && (c) $\gamma_3 = 0.6$ \\
\end{tabular}
\caption{ Illustration of a synthetic dataset sampled from the distribution in Eq.~\eqref{eq:synthetic_data}.}
   \label{fig:illustration_synt_data}
\end{figure}
These distributions are specified so that we can control the relative degree of functional dependence between the variates by varying the relative size of noise scaling parameters $\gamma_1$, $\gamma_2$ and $\gamma_3$. 
The question is then whether the dependence between (a) and (b) is larger than the dependence between (a) and (c). 
In these experiments, we fixed $\gamma_1 = \gamma_2 = 0.3$, while we varied $\gamma_3$, and used a Gaussian kernel with bandwidth $\sigma$ selected as the median pairwise distance between data points.  This kernel is sufficient to obtain good performance, although others choices exist \citep{gretton2012optimal}.  

Figure \ref{fig:powerofthetest} shows the power of the dependent and the independent tests as we vary $\gamma_3$. 
It is clear from these results that the dependent test is far more powerful than the independent test over the great majority of $\gamma_3$ values considered.
 Figure~\ref{fig:empirical_HSIC_differentsamplesize} demonstrates that this superior test power arises due to the tighter and more concentrated distribution
of the dependent statistic.

\begin{figure}[!ht]
\centering
\includegraphics[width=0.35\textwidth]{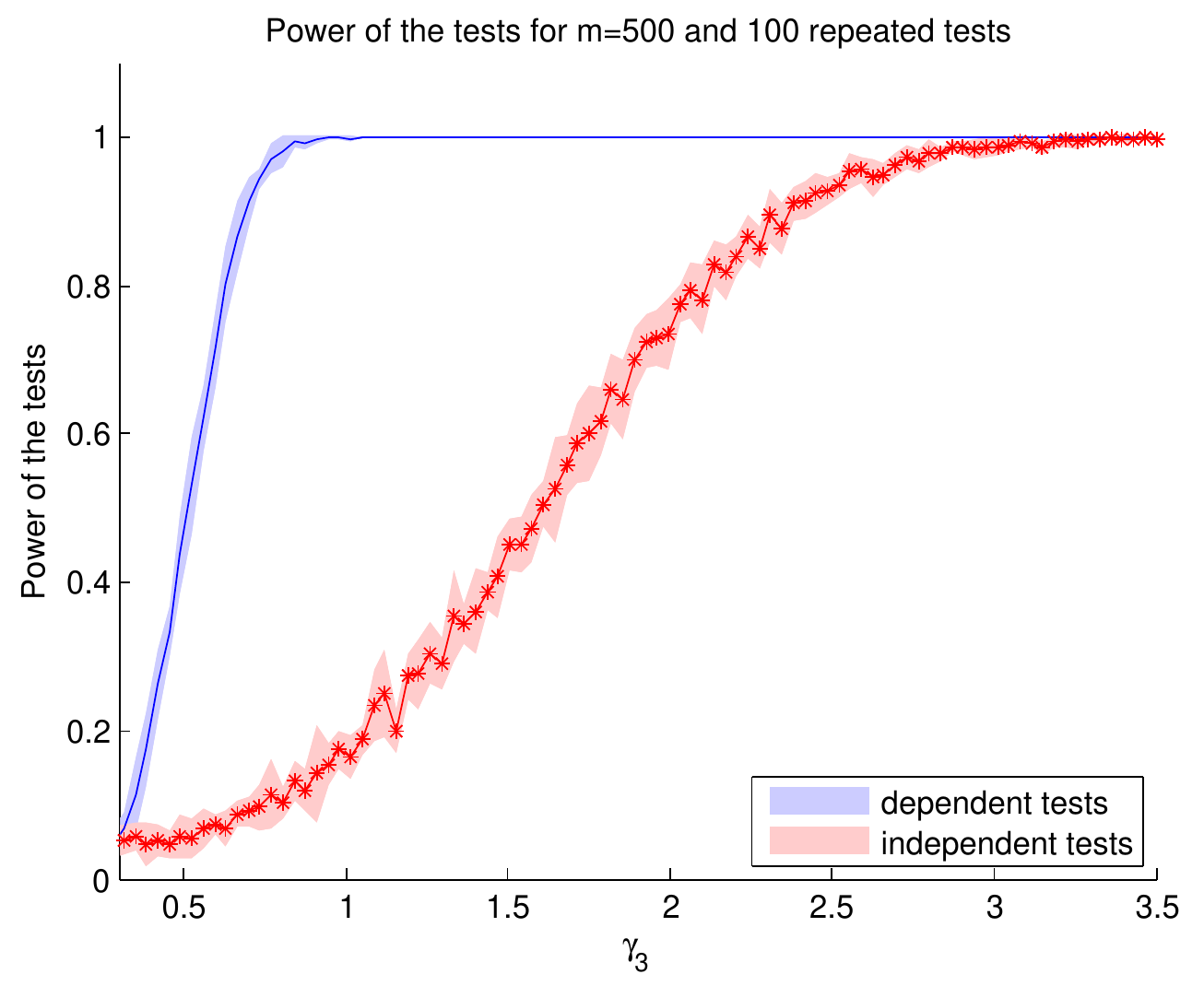}
\caption{Power of the dependent and independent test as a function of $\gamma_3$ on the synthetic data described in Section~\ref{sec:synthetic_experiments}.  \Red{For values of $\gamma_3>0.3$ the distribution in Fig.~\ref{fig:illustration_synt_data}(a) is closer to \ref{fig:illustration_synt_data}(b) than to \ref{fig:illustration_synt_data}(c). The problem becomes difficult as $\gamma_3\rightarrow 0.3$.} As predicted by theory, the dependent test is significantly more powerful over almost all values of $\gamma_3$ by a substantial margin. }
\label{fig:powerofthetest}
\end{figure}	

\begin{figure*}[!ht]
\centering
\begin{tabular}{ccc}
   \includegraphics[width=0.3\textwidth]{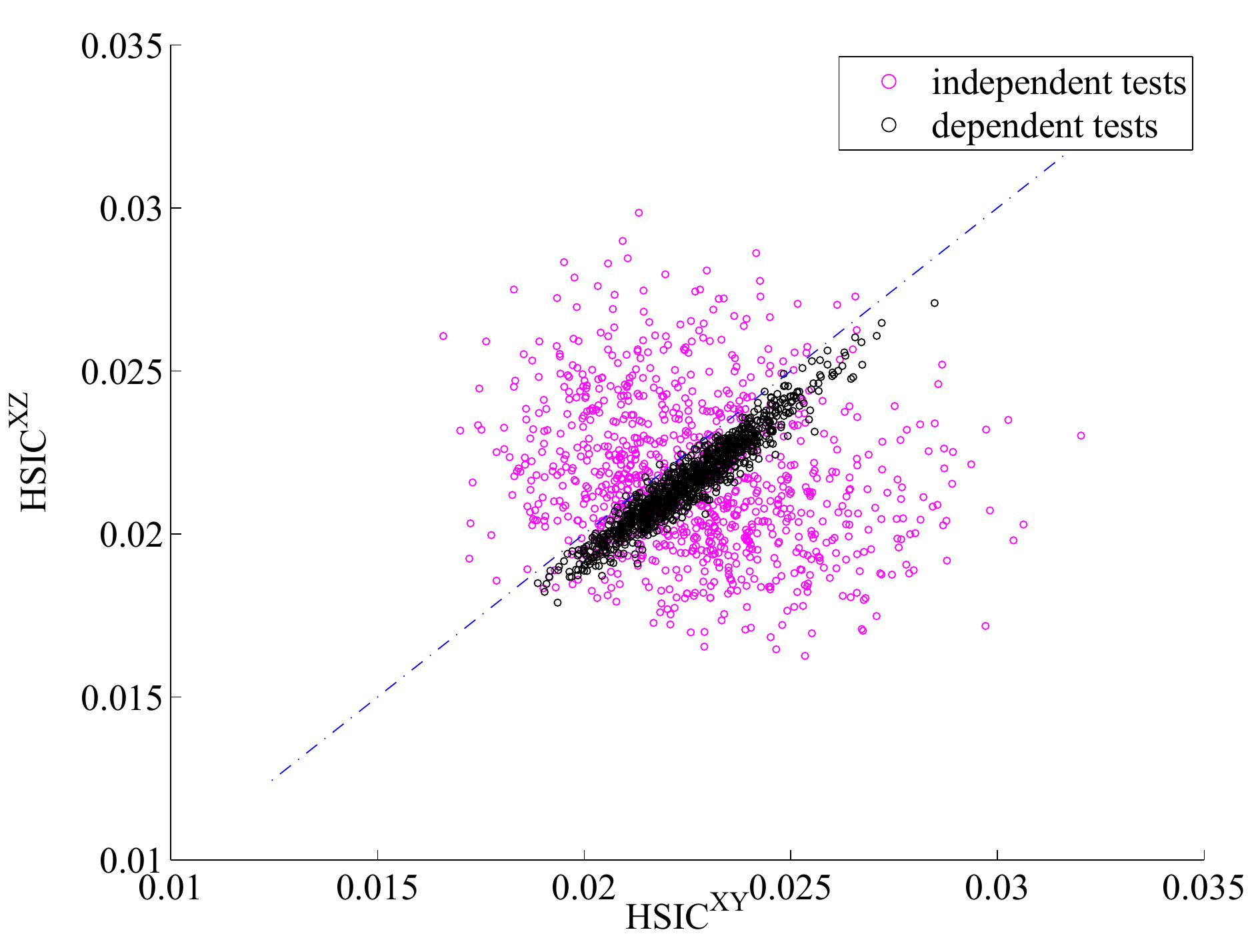} &
   \includegraphics[width=0.3\textwidth]{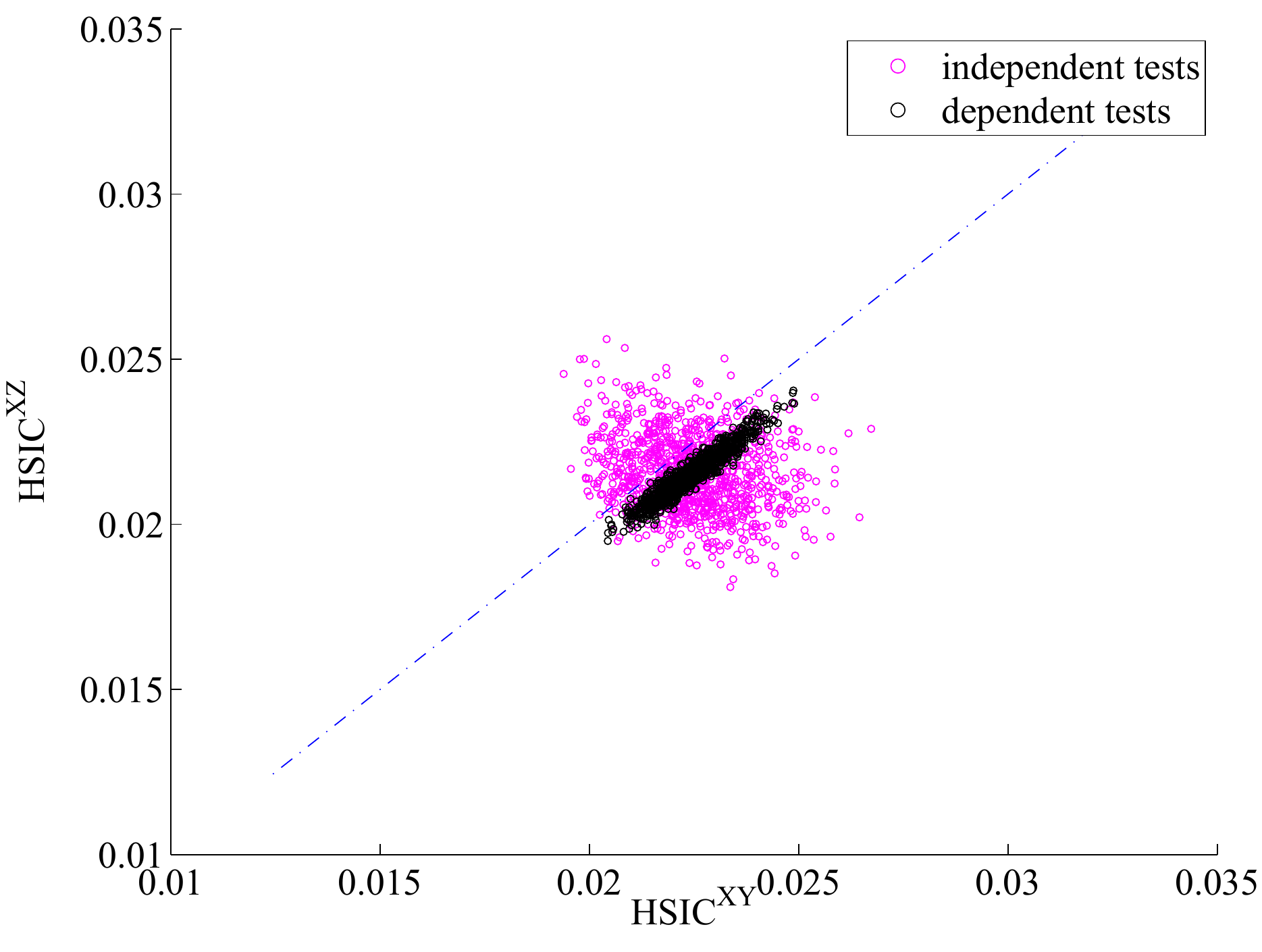} &
      \includegraphics[width=0.3\textwidth]{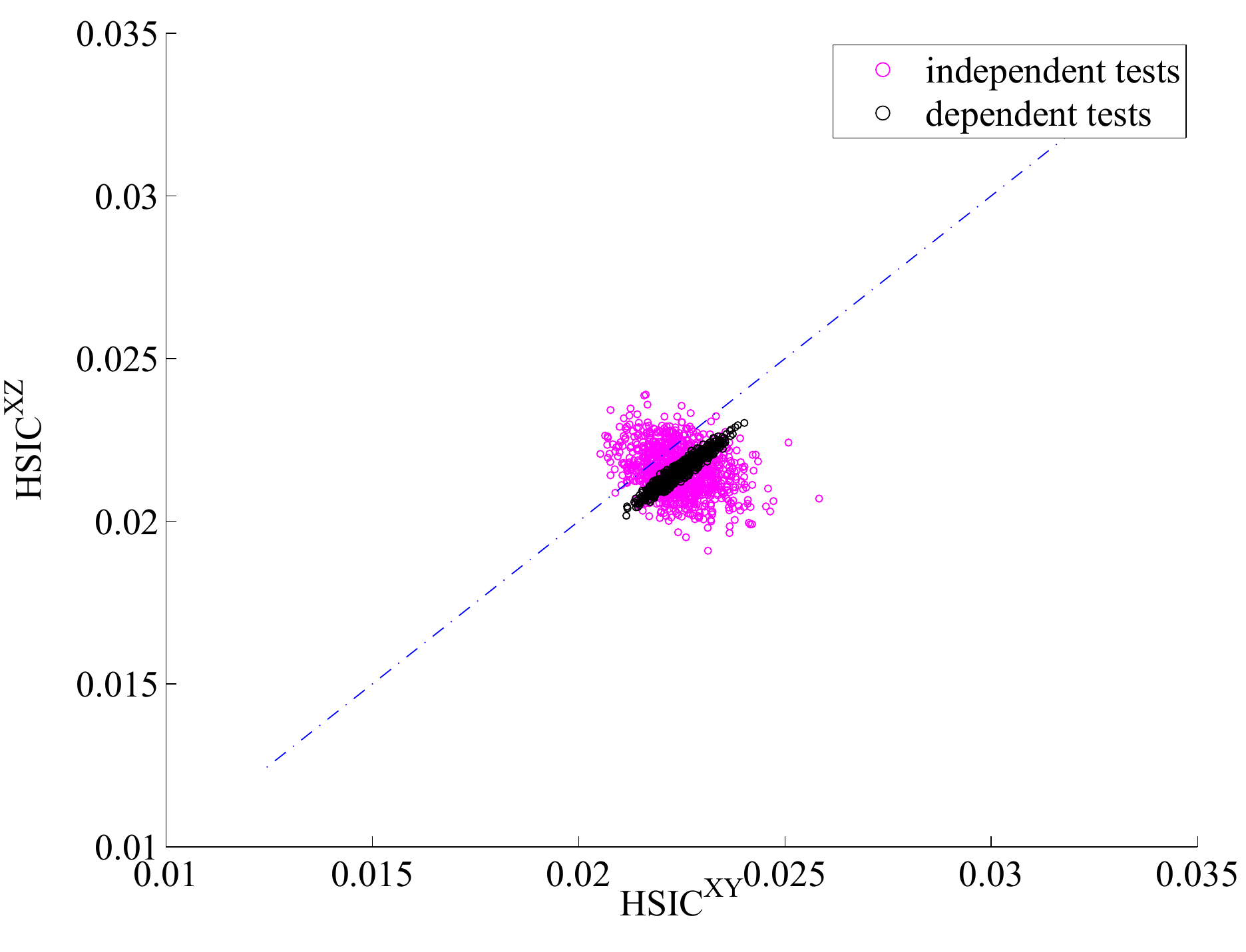} \\
   (a) m=500, $\gamma_3=0.7$ & 
   (b) m=1000, $\gamma_3=0.7$ &   
   (c)  m=3000, $\gamma_3=0.7$\\
    $\textrm{p}_{\textrm{dep}} = 0.0189$, $\textrm{p}_{\textrm{indep}} = 0.3492$& 
    $\textrm{p}_{\textrm{dep}} = 10^{-4}$, 
    $\textrm{p}_{\textrm{indep}} = 0.3690$ & 
    $\textrm{p}_{\textrm{dep}} = 10^{-6}$, 
    $\textrm{p}_{\textrm{indep}} = 0.2876$ \\
    \\
   \includegraphics[width=0.3\textwidth]{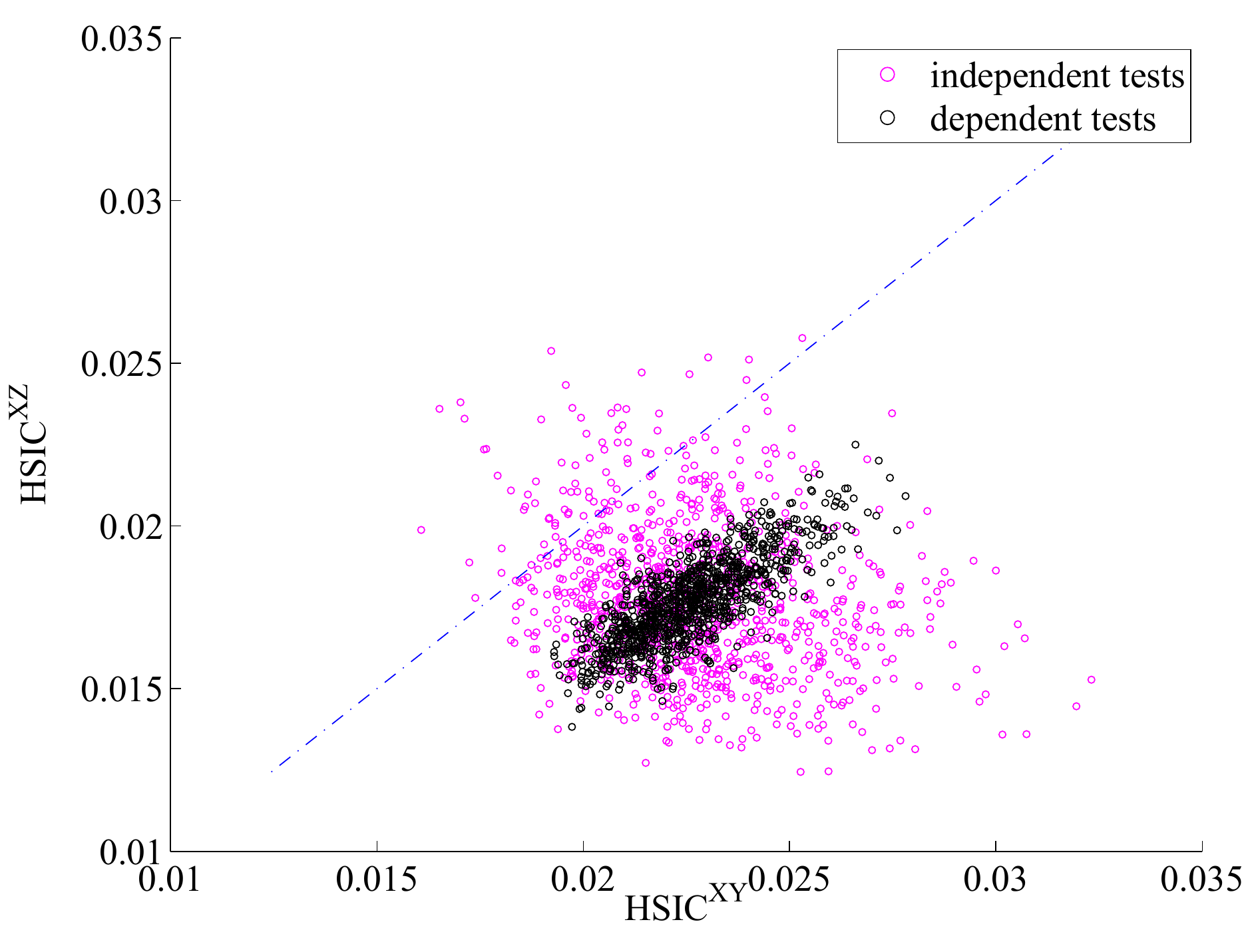} &
   \includegraphics[width=0.3\textwidth]{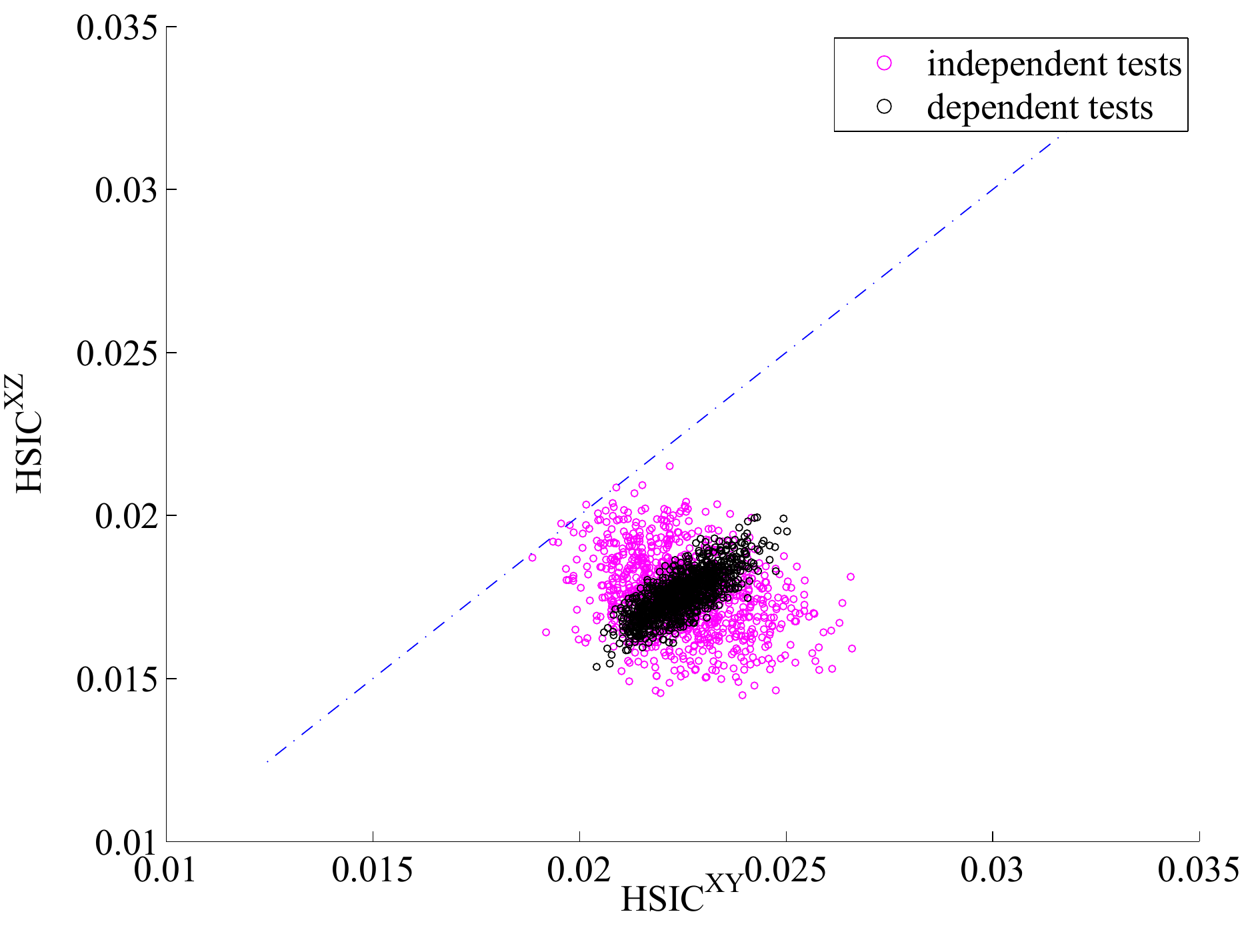} &
   \includegraphics[width=0.3\textwidth]{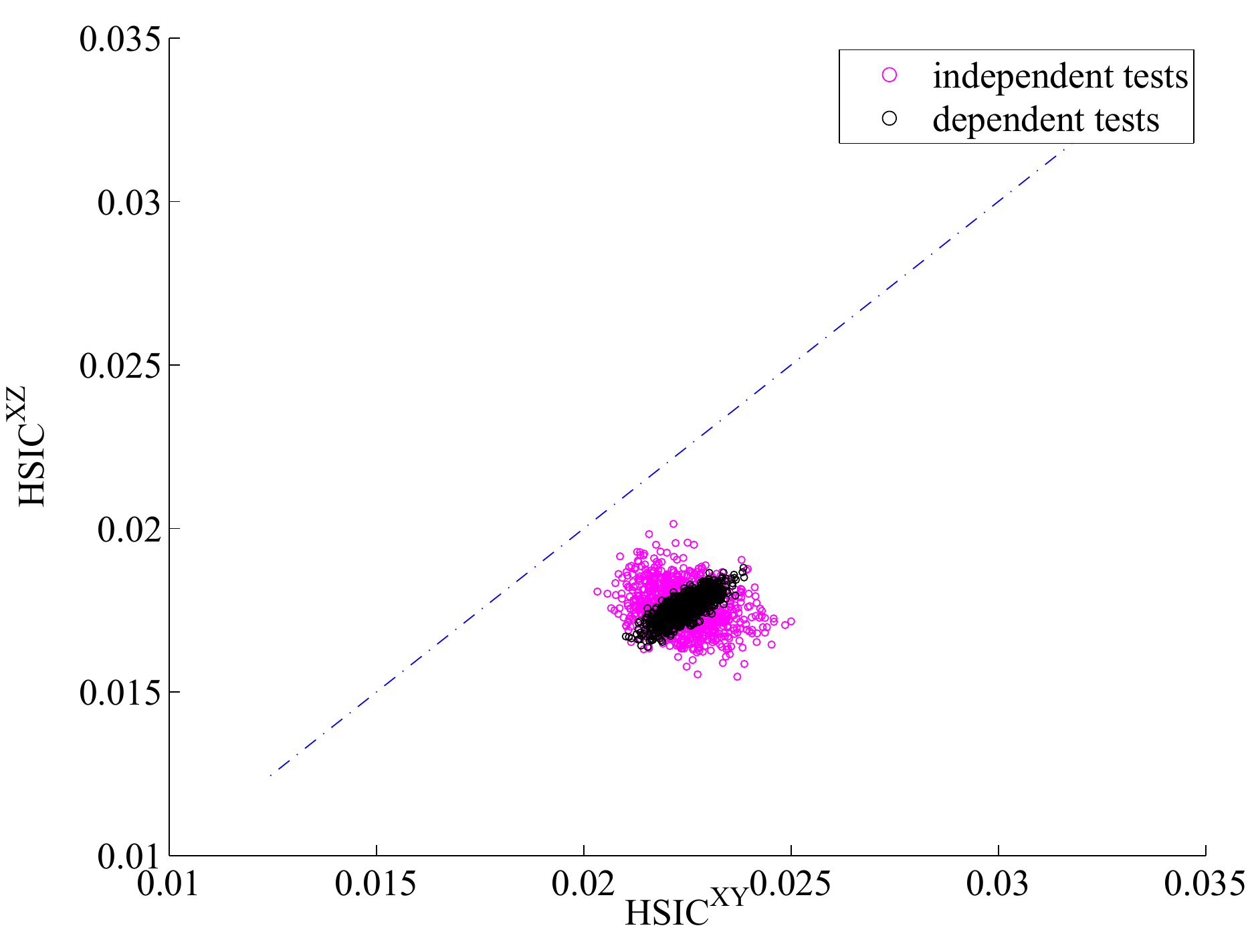} \\
   (d) m=500, $\gamma_3=1.7$ & (e) m=1000, $\gamma_3=1.7$ & (f) m=3000, $\gamma_3=1.7$\\
   $\textrm{p}_{\textrm{dep}} = 10^{-9}$, $\textrm{p}_{\textrm{indep}} = 0.982$  & $\textrm{p}_{\textrm{dep}} = 10^{-10}$, $\textrm{p}_{\textrm{indep}} = 0.0326$ & $\textrm{p}_{\textrm{dep}} = 10^{-13}$, 
   $\textrm{p}_{\textrm{indep}} = 0.005$ \\
\end{tabular}
\caption{For the synthetic experiments described in Section~\ref{sec:synthetic_experiments}, we plot empirical HSIC values for dependent and independent tests for 100 repeated draws with different sample sizes.  Empirical $p$-values for each test show that the dependent distribution converges faster than the independent distribution even at low sample size, resulting in a more powerful statistical test.}
   \label{fig:empirical_HSIC_differentsamplesize}
\end{figure*}
%


\subsection{Multilingual data}\label{translate}
In this section,  we demonstrate dependence testing to predict the relative similarity of different languages.  We use a real world dataset taken from the parallel European Parliament corpus~\cite{koehn2005europarl}. 
We choose 3000 random documents in common written in: Finnish (fi), Italian (it), French (fr), Spanish (es), Portuguese (pt), English (en), Dutch (nl), German (de), Danish (da) and Swedish (sv).  These languages can be broadly categorized into either the Romance, Germanic or Uralic groups~\cite{gray2003language}.  In this dataset, we considered each language as a random variable and each document as an observation. 

Our first goal is to test if the statistical dependence between two languages in the same group is greater than the statistical dependence between languages in different groups.  
For pre-processing, we removed stop-words (\url{http://www.nltk.org}) and performed stemming (\url{http://snowball.tartarus.org}).  We applied the TF-IDF model as a feature representation and used a Gaussian kernel with the bandwidth $\sigma$ set per language as the median pairwise distance between documents. 

In Table~\ref{tab:europeanlang_pvalue}, a selection of tests between language groups (Germanic, Romance, and Uralic) is given:  all $p$-values strongly support that our relative dependence test finds the different language groups with very high significance.
\begin{table}[ht]
\centering
\begin{tabular}{cccc}
\hline  Source & Target 1 & Target 2 & $p$-value \\
\hline
 es & pt & fi & $\mathbf{0.0066}$ \\
 fr & it & da & $\mathbf{0.0418}$ \\
 it & es & fi & $\mathbf{0.0169}$ \\
 pt & es & da & $\mathbf{0.0173}$ \\
 de & nl & fi & $\mathbf{<10^{-4}}$ \\
 nl & en & es & $\mathbf{<10^{-4}}$ \\
 da & sv & fr & $\mathbf{<10^{-6}}$ \\
 sv & en & it & $\mathbf{<10^{-4}}$ \\
 en & de & es & $\mathbf{<10^{-4}}$ \\
\hline
\end{tabular}
\caption{A selection of relative dependency tests between two pairs of HSIC statistics for the multilingual corpus data. \Red{Low $p$-values indicate a source is closer to target 1 than to target 2.}
In all cases, the test correctly identifies that languages within the same group are more strongly related than those in different groups.}
\label{tab:europeanlang_pvalue}
\end{table}

Further, if we focus on the Romance family, our test enables one to answer more fine-grained questions about the relative similarity of languages within the same group.  As before, we determine the ground truth similarities from the topology of the tree of European languages determined by the linguistics community~\cite{gray2003language,Bouckaert2012} as illustrated in Fig.~\ref{fig:romance_tree} for the Romance group.  We have run the test on all triplets from the corpus for which the topology of the tree specifies a correct ordering of the dependencies. In a fraction of a second (excluding kernel computation), we are able to recover certain features of the subtree of relationships between languages present in the Romance language group  (Table~\ref{tab:romance_lang}).  
The test always indicates the correct relative similarity of languages when nearby languages (pt,es) are compared with those further away (ft,it), however errors are made when comparing triplets of languages \Red{for which the nearest common ancestor is more than one link removed}.
\begin{figure}[!ht]
\centering
\includegraphics[width=0.3\textwidth]{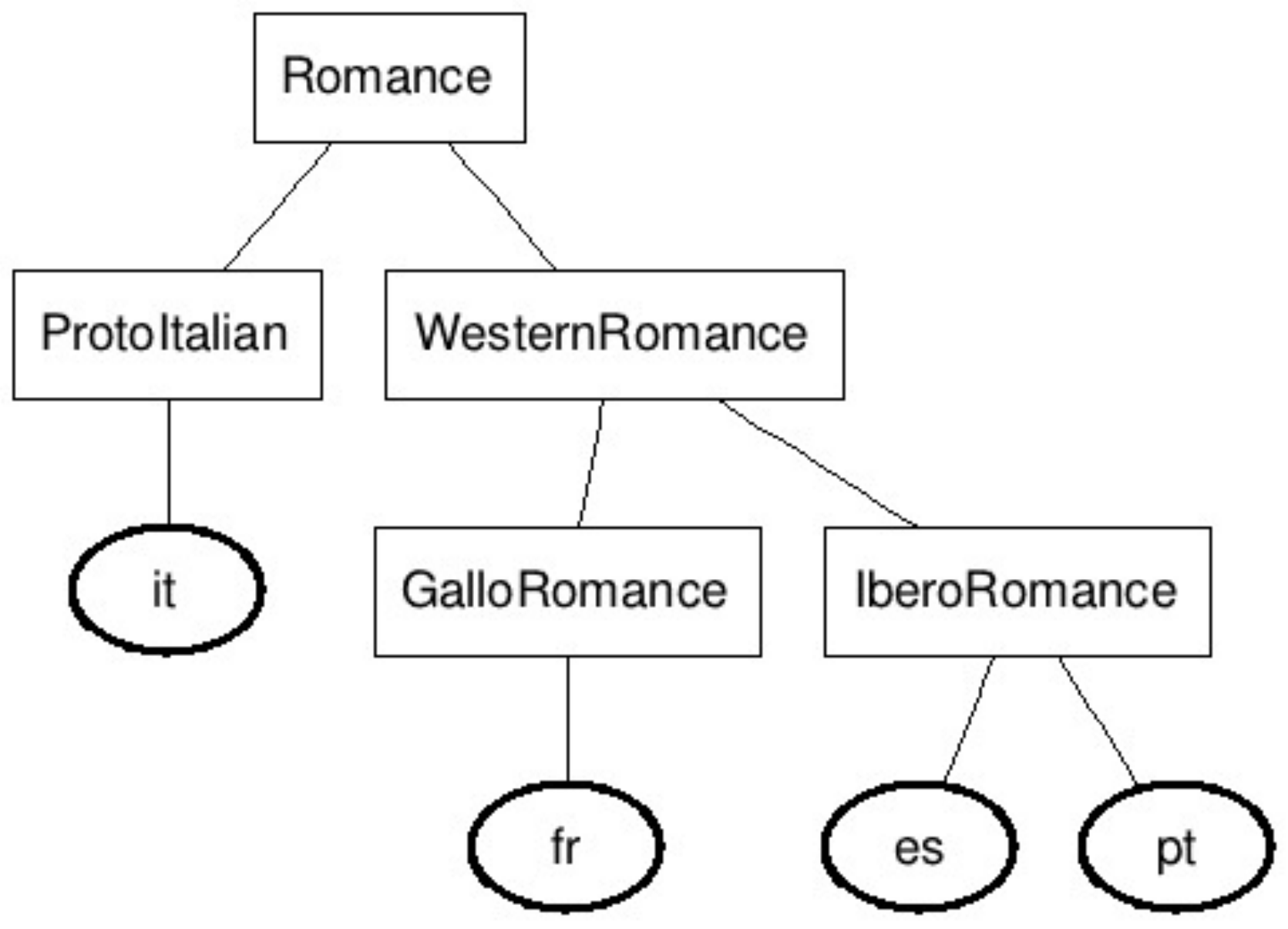}
\caption{Partial tree of Romance languages adapted from~\cite{gray2003language}.}
\label{fig:romance_tree}
\end{figure}
\begin{table}[!ht]
\centering
\begin{tabular}{cccc}
\hline  Source & Target 1 & Target 2 & $p$-value \\
\hline
fr & es & it & $\mathbf{0.0157}$ \\
fr & pt & it & $0.1882$ \\
es & fr & it & $0.2147$ \\
es & pt & it & $\mathbf{<10^{-4}}$ \\
es & pt & fr & $\mathbf{<10^{-4}}$ \\
pt & fr & it & $0.7649$ \\
pt & es & it & $\mathbf{ 0.0011}$ \\
pt & es & fr & $\mathbf{<10^{-8}}$ \\
\hline
\end{tabular}
\caption{Relative dependency tests between Romance languages. The tests are ordered such that a low $p$-value corresponds with a confirmation of the topology of the tree of Romance languages determined by the linguistics community \cite{gray2003language}.}
\label{tab:romance_lang}
\end{table}

In our next tests, we evaluate our more general framework for testing relative dependencies with more than two HSIC statistics.  We chose four languages, and tested whether the average dependence between languages in the same group is higher than the dependence between groups.  The results of these tests are in Table~\ref{tab:europeanlang_pvalue2}.  As before, our   
test is able to distinguish language groups with high significance.
\begin{table}[!ht]
\centering
\begin{tabular}{ccc}
\hline  Source & Targets & $p$-value \\
\hline
da & de  sv  fi & $\mathbf{<10^{-9}}$ \\
da & sv  en  fr & $\mathbf{<10^{-9}}$ \\
de & sv  en  it & $\mathbf{<10^{-5}}$ \\
fr & it  es  sv & $\mathbf{<10^{-5}}$ \\
es & fr  pt  nl & $\mathbf{0.0175}$ \\
\hline
\end{tabular}
\caption{Relative dependency test between four pairs of HSIC statistics for the multilingual corpus data.  These tests show the ability of the relative dependence test to generalize to arbitrary numbers of HSIC statistics by constructing a rotation matrix using Algorithm~\ref{alg:generalization_algo}. \Red{In all cases $v=[1$  $1$ $-2]$.}}
\label{tab:europeanlang_pvalue2}
\end{table}

%
\subsection{Pediatric glioma data}\label{sec:neuroscience_data}

Brain tumors are the most common solid tumors in children and have the highest mortality rate of all pediatric cancers. Despite advances in multimodality therapy, children with pediatric high-grade gliomas (pHGG) invariably have an overall survival of around 20\% at 5 years. Depending on their location (e.g.\ brainstem, central nuclei, or supratentorial), pHGG present different characteristics in terms of radiological appearance, histology, and prognosis. The hypothesis is that pHGG have different genetic origins and oncogenic pathways depending on their location. Thus, the biological processes involved in the development of the tumor may be different from one location to another.

In order to evaluate such hypotheses, pre-treatment frozen tumor samples were obtained from 53 children with newly diagnosed pHGG from Necker Enfants Malades (Paris, France) from Puget et al, \yrcite{puget2012mesenchymal}.  The 53 tumors are divided into 3 locations: supratentorial (HEMI), central nuclei (MIDL), and brain stem (DIPG).  The final dataset is organized in 3 blocks of variables defined for the 53 tumors: X is a block of indicator variables describing the location category, the second data matrix Y provides the expression of 15 702 genes (GE). The third data matrix Z contains the imbalances of 1229 segments (CGH) of chromosomes.  

For X, we use a linear kernel, which is characteristic for indicator variables, and for Y and Z, the kernel was chosen to be the Gaussian kernel with $\sigma$ selected as the median of pairwise distances.  The $p$-value of our relative dependency test is $<10^{-5}$. This shows that the tumor location in the brain is more dependent on gene expression than on chromosomal imbalances.  By contrast with Section~\ref{sec:synthetic_experiments}, the independent test was also able to find the same ordering of dependence, but with a $p$-value that is three orders of magnitude larger ($p=0.005$).  Figure~\ref{fig:glioma_twosigma} shows iso-curves of the Gaussian distributions estimated in the independent and dependent tests. The empirical relative dependency is consistent with findings in the medical literature, and provides additional statistical support for the importance of tumor location in Glioma~\cite{gilbertson2007,Palm2009,puget2012mesenchymal}.
\begin{figure}
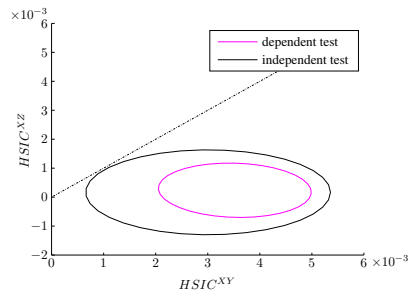

\centering
\scalebox{.5} 
{
\begin{pgfpicture}
  \begin{pgfscope}
    \definecolor{matfig2pgf_color}{rgb}{1,1,1}\pgfsetfillcolor{matfig2pgf_color}
    \pgfpathrectangle{\pgfpoint{1.41099cm}{1.05901cm}}{\pgfpoint{8.30594cm}{6.1634cm}}
    \pgfusepath{fill}
  \end{pgfscope}
  \begin{pgfscope}
    \pgfsetlinewidth{0.5pt}
    \definecolor{matfig2pgf_linecolor}{rgb}{0.000,0.000,0.000}
    \pgfsetstrokecolor{matfig2pgf_linecolor}
    \foreach \x in {1.41099,2.79531,4.17963,5.56396,6.94828,8.3326,9.71692}
    {
      \pgfpathmoveto{\pgfpoint{\x cm}{1.05901cm}}\pgfpathlineto{\pgfpoint{\x cm}{1.12065cm}}
    }
    \pgfusepath{stroke}
  \end{pgfscope}
  \begin{pgfscope}
    \pgfsetlinewidth{0.5pt}
    \definecolor{matfig2pgf_linecolor}{rgb}{0.000,0.000,0.000}
    \pgfsetstrokecolor{matfig2pgf_linecolor}
    \foreach \y in {1.05901,1.82944,2.59986,3.37029,4.14071,4.91113,5.68156,6.45198,7.22241}
    {
      \pgfpathmoveto{\pgfpoint{1.41099cm}{\y cm}}\pgfpathlineto{\pgfpoint{1.49405cm}{\y cm}}
    }
    \pgfusepath{stroke}
  \end{pgfscope}
  \begin{pgfscope}
    \pgfsetlinewidth{0.5pt}
    \definecolor{matfig2pgf_linecolor}{rgb}{0.000,0.000,0.000}
    \pgfsetstrokecolor{matfig2pgf_linecolor}
    \pgfpathmoveto{\pgfpoint{1.410987cm}{1.059012cm}}\pgfpathlineto{\pgfpoint{9.716923cm}{1.059012cm}}
    \pgfusepath{stroke}
  \end{pgfscope}
  \begin{pgfscope}
    \pgfsetlinewidth{0.5pt}
    \definecolor{matfig2pgf_linecolor}{rgb}{0.000,0.000,0.000}
    \pgfsetstrokecolor{matfig2pgf_linecolor}
    \pgfpathmoveto{\pgfpoint{1.410987cm}{1.059012cm}}\pgfpathlineto{\pgfpoint{1.410987cm}{7.222407cm}}
    \pgfusepath{stroke}
  \end{pgfscope}
  \begin{pgfscope}
    \definecolor{matfig2pgf_fillcolor}{rgb}{0.000,0.000,0.000}
    \pgfsetfillcolor{matfig2pgf_fillcolor}
    \pgftext[x=1.41099cm,y=0.959012cm,top]{$0$}
    \pgftext[x=2.79531cm,y=0.959012cm,top]{$1$}
    \pgftext[x=4.17963cm,y=0.959012cm,top]{$2$}
    \pgftext[x=5.56396cm,y=0.959012cm,top]{$3$}
    \pgftext[x=6.94828cm,y=0.959012cm,top]{$4$}
    \pgftext[x=8.3326cm,y=0.959012cm,top]{$5$}
    \pgftext[x=9.71692cm,y=0.959012cm,top]{$6$}
    \pgftext[x=9.91692cm,y=1.05901cm,left,top]{$\times 10^{-3}$}
  \end{pgfscope}
  \begin{pgfscope}
    \definecolor{matfig2pgf_fillcolor}{rgb}{0.000,0.000,0.000}
    \pgfsetfillcolor{matfig2pgf_fillcolor}
    \pgftext[x=1.31099cm,y=1.05901cm,right]{$-2$}
    \pgftext[x=1.31099cm,y=1.82944cm,right]{$-1$}
    \pgftext[x=1.31099cm,y=2.59986cm,right]{$0$}
    \pgftext[x=1.31099cm,y=3.37029cm,right]{$1$}
    \pgftext[x=1.31099cm,y=4.14071cm,right]{$2$}
    \pgftext[x=1.31099cm,y=4.91113cm,right]{$3$}
    \pgftext[x=1.31099cm,y=5.68156cm,right]{$4$}
    \pgftext[x=1.31099cm,y=6.45198cm,right]{$5$}
    \pgftext[x=1.31099cm,y=7.22241cm,right]{$6$}
   \pgftext[x=1.31099cm,y=7.32241cm,bottom,right]{$\times 10^{-3}$}
  \end{pgfscope}
  \makeatletter\ifpgf@draftmode\makeatother\else
  \begin{pgfscope}
    \pgfpathrectangle{\pgfpoint{1.41099cm}{1.05901cm}}{\pgfpoint{8.30594cm}{6.1634cm}}
    \pgfusepath{clip}
    \begin{pgfscope}
      \pgfsetlinewidth{0.50pt}
      \definecolor{matfig2pgf_linecolor}{rgb}{1.000,0.000,1.000}
      \pgfsetstrokecolor{matfig2pgf_linecolor}
      \pgfsetdash{}{0pt}
      \pgfsetroundjoin
      \pgfplothandlerlineto
\pgfplotstreamstart
\foreach \x/\y in {8.315/2.697,8.300/2.827,8.220/2.955,8.078/3.078,7.878/3.191,7.627/3.291,7.332/3.375,7.004/3.440,6.653/3.483
,6.290/3.504,5.927/3.502,5.575/3.477,5.246/3.430,4.951/3.361,4.698/3.274,4.496/3.172,4.352/3.056,4.270/2.932,4.252/2.804
,4.300/2.674,4.412/2.548,4.583/2.430,4.810/2.323,5.083/2.230,5.396/2.156,5.737/2.101,6.095/2.069,6.460/2.059,6.819/2.073
,7.160/2.110,7.474/2.168,7.749/2.246,7.977/2.341,8.151/2.450,8.265/2.570,8.315/2.697}
{
\pgfplotstreampoint{\pgfpoint{\x cm}{\y cm}}
}
\pgfplotstreamend
      \pgfusepath{stroke}
    \end{pgfscope}
  \end{pgfscope}
  \fi
  \makeatletter\ifpgf@draftmode\makeatother\else
  \begin{pgfscope}
    \pgfpathrectangle{\pgfpoint{1.41099cm}{1.05901cm}}{\pgfpoint{8.30594cm}{6.1634cm}}
    \pgfusepath{clip}
    \begin{pgfscope}
      \pgfsetlinewidth{0.50pt}
      \definecolor{matfig2pgf_linecolor}{rgb}{0.000,0.000,0.000}
      \pgfsetstrokecolor{matfig2pgf_linecolor}
      \pgfsetdash{}{0pt}
      \pgfsetroundjoin
      \pgfplothandlerlineto
\pgfplotstreamstart
\foreach \x/\y in {8.827/2.729,8.775/2.930,8.619/3.126,8.366/3.308,8.023/3.472,7.602/3.612,7.115/3.723,6.578/3.803,6.009/3.848
,5.427/3.857,4.848/3.830,4.294/3.767,3.780/3.671,3.324/3.545,2.940/3.393,2.641/3.219,2.436/3.029,2.331/2.830,2.331/2.627
,2.436/2.428,2.641/2.238,2.940/2.065,3.324/1.912,3.780/1.786,4.294/1.690,4.848/1.627,5.427/1.600,6.009/1.609,6.578/1.654
,7.115/1.734,7.602/1.845,8.023/1.985,8.366/2.149,8.619/2.332,8.775/2.527,8.827/2.729}
{
\pgfplotstreampoint{\pgfpoint{\x cm}{\y cm}}
}
\pgfplotstreamend
      \pgfusepath{stroke}
    \end{pgfscope}
  \end{pgfscope}
  \fi
    \definecolor{matfig2pgf_fillcolor}{rgb}{0.000,0.000,0.000}
    \pgfsetfillcolor{matfig2pgf_fillcolor}
    \pgftext[top,x=5.55303cm,y=0.457972cm,rotate=0]{$HSIC^{XY}$}
    \definecolor{matfig2pgf_fillcolor}{rgb}{0.000,0.000,0.000}
    \pgfsetfillcolor{matfig2pgf_fillcolor}
    \pgftext[bottom,x=0.8099cm,y=4.10793cm,rotate=90]{$HSIC^{XZ}$}
  \makeatletter\ifpgf@draftmode\makeatother\else
  \begin{pgfscope}
    \pgfpathrectangle{\pgfpoint{1.41099cm}{1.05901cm}}{\pgfpoint{8.30594cm}{6.1634cm}}
    \pgfusepath{clip}
    \begin{pgfscope}
      \pgfsetlinewidth{0.50pt}
      \definecolor{matfig2pgf_linecolor}{rgb}{0.000,0.000,0.000}
      \pgfsetstrokecolor{matfig2pgf_linecolor}
      \pgfsetdash{{0.50pt}{1.00pt}{1.50pt}{1.00pt}}{0pt}
      \pgfsetroundjoin
      \pgfplothandlerlineto
\pgfplotstreamstart
\foreach \x/\y in {1.411/2.600,8.827/6.727}
{
\pgfplotstreampoint{\pgfpoint{\x cm}{\y cm}}
}
\pgfplotstreamend
      \pgfusepath{stroke}
    \end{pgfscope}
  \end{pgfscope}
  \fi
  \begin{pgfscope}
    \definecolor{matfig2pgf_color}{rgb}{1,1,1}\pgfsetfillcolor{matfig2pgf_color}
    \pgfpathrectangle{\pgfpoint{5.61828cm}{5.95559cm}}{\pgfpoint{3.95297cm}{1.07759cm}}
    \pgfusepath{fill}
  \end{pgfscope}
  \begin{pgfscope}
    \pgfsetlinewidth{0.5pt}
    \definecolor{matfig2pgf_linecolor}{rgb}{0.000,0.000,0.000}
    \pgfsetstrokecolor{matfig2pgf_linecolor}
    \pgfusepath{stroke}
  \end{pgfscope}
  \begin{pgfscope}
    \pgfsetlinewidth{0.5pt}
    \definecolor{matfig2pgf_linecolor}{rgb}{0.000,0.000,0.000}
    \pgfsetstrokecolor{matfig2pgf_linecolor}
    \pgfusepath{stroke}
  \end{pgfscope}
  \begin{pgfscope}
    \pgfsetlinewidth{0.5pt}
    \pgfpathrectangle{\pgfpoint{5.61828cm}{5.95559cm}}{\pgfpoint{3.95297cm}{1.07759cm}}
    \pgfusepath{stroke}
  \end{pgfscope}
    \definecolor{matfig2pgf_fillcolor}{rgb}{0.000,0.000,0.000}
    \pgfsetfillcolor{matfig2pgf_fillcolor}
    \pgftext[left,x=7.0128cm,y=6.7143cm,rotate=0]{dependent test}
  \makeatletter\ifpgf@draftmode\makeatother\else
  \begin{pgfscope}
    \pgfpathrectangle{\pgfpoint{5.61828cm}{5.95559cm}}{\pgfpoint{3.95297cm}{1.07759cm}}
    \pgfusepath{clip}
    \begin{pgfscope}
      \pgfsetlinewidth{0.50pt}
      \definecolor{matfig2pgf_linecolor}{rgb}{1.000,0.000,1.000}
      \pgfsetstrokecolor{matfig2pgf_linecolor}
      \pgfsetdash{}{0pt}
      \pgfsetroundjoin
      \pgfplothandlerlineto
\pgfplotstreamstart
\foreach \x/\y in {5.830/6.733,6.887/6.733}
{
\pgfplotstreampoint{\pgfpoint{\x cm}{\y cm}}
}
\pgfplotstreamend
      \pgfusepath{stroke}
    \end{pgfscope}
  \end{pgfscope}
  \fi
  \makeatletter\ifpgf@draftmode\makeatother\else
  \begin{pgfscope}
    \pgfpathrectangle{\pgfpoint{5.61828cm}{5.95559cm}}{\pgfpoint{3.95297cm}{1.07759cm}}
    \pgfusepath{clip}
  \end{pgfscope}
  \fi
    \definecolor{matfig2pgf_fillcolor}{rgb}{0.000,0.000,0.000}
    \pgfsetfillcolor{matfig2pgf_fillcolor}
    \pgftext[left,x=7.0128cm,y=6.23049cm,rotate=0]{independent test}
  \makeatletter\ifpgf@draftmode\makeatother\else
  \begin{pgfscope}
    \pgfpathrectangle{\pgfpoint{5.61828cm}{5.95559cm}}{\pgfpoint{3.95297cm}{1.07759cm}}
    \pgfusepath{clip}
    \begin{pgfscope}
      \pgfsetlinewidth{0.50pt}
      \definecolor{matfig2pgf_linecolor}{rgb}{0.000,0.000,0.000}
      \pgfsetstrokecolor{matfig2pgf_linecolor}
      \pgfsetdash{}{0pt}
      \pgfsetroundjoin
      \pgfplothandlerlineto
\pgfplotstreamstart
\foreach \x/\y in {5.830/6.256,6.887/6.256}
{
\pgfplotstreampoint{\pgfpoint{\x cm}{\y cm}}
}
\pgfplotstreamend
      \pgfusepath{stroke}
    \end{pgfscope}
  \end{pgfscope}
  \fi
  \makeatletter\ifpgf@draftmode\makeatother\else
  \begin{pgfscope}
    \pgfpathrectangle{\pgfpoint{5.61828cm}{5.95559cm}}{\pgfpoint{3.95297cm}{1.07759cm}}
    \pgfusepath{clip}
  \end{pgfscope}
  \fi
  \makeatletter\ifpgf@draftmode\makeatother\pgftext[x=4.99956cm,y=3.75241cm]{\Huge{DRAFT}}\fi
\end{pgfpicture}}
\caption{$2\sigma$ iso-curves of the Gaussian distributions estimated from the pediatric Glioma data.  As before, the dependent test has a much lower variance than the independent test.  The tests support the stronger dependence on the tumor location to gene expression than chromosomal imbalances.}
\label{fig:glioma_twosigma}
\end{figure}	 
%


\section{Conclusions}

We have described a novel statistical test that determines whether a source random variable is more strongly dependent on one target random variable or another.  This test, built on the Hilbert-Schmidt Independence Criterion, is low variance, consistent, and unbiased.  We have shown that our test is strictly more powerful than a test that does not exploit the covariance between HSIC statistics, and empirically achieves $p$-values several orders of magnitude smaller.
We have empirically demonstrated the test performance on synthetic data, where the degree of dependence could be controlled; on the challenging problem of identifying language groups from a multilingual corpus; and for finding the most important determinant of Glioma type.  The computation and memory requirements of the test are quadratic in the sample size, matching the performance of HSIC and related tests for dependence between two random variables.  The test is therefore scalable to the wide range of problem instances where non-parametric dependency tests are currently applied.  We have generalized the test framework to more than two HSIC statistics, and have given an algorithm to construct a consistent, low-variance, unbiased test in this setting.

\subsection*{Acknowledgements}
We thank Ioannis Antonoglou for helpful discussions. 
The first author is supported by a fellowship from CentraleSup\'{e}lec.  This work is partially funded by the European Commission through ERC Grant 259112 and FP7-MCCIG334380.

\bibliography{bibliography}

\begin{thebibliography}{34}
\providecommand{\natexlab}[1]{#1}
\providecommand{\url}[1]{\texttt{#1}}
\expandafter\ifx\csname urlstyle\endcsname\relax
  \providecommand{\doi}[1]{doi: #1}\else
  \providecommand{\doi}{doi: \begingroup \urlstyle{rm}\Url}\fi

\bibitem[Arcones \& Gine(1993)Arcones and Gine]{arcones1993limit}
Arcones, M.~A. and Gine, E.
\newblock Limit theorems for {U}-processes.
\newblock \emph{The Annals of Probability}, pp.\  1494--1542, 1993.

\bibitem[Bouckaert et~al.(2012)Bouckaert, Lemey, Dunn, Greenhill, Alekseyenko,
  Drummond, Gray, Suchard, and Atkinson]{Bouckaert2012}
Bouckaert, R., Lemey, P., Dunn, M., Greenhill, S.~J., Alekseyenko, A.~V.,
  Drummond, A.~J., Gray, R.~D., Suchard, M.~A., and Atkinson, Q.~D.
\newblock Mapping the origins and expansion of the {Indo-European} language
  family.
\newblock \emph{Science}, 337\penalty0 (6097):\penalty0 957--960, 2012.

\bibitem[Bring(1996)]{bring1996geometric}
Bring, J.
\newblock A geometric approach to compare variables in a regression model.
\newblock \emph{The American Statistician}, 50\penalty0 (1):\penalty0 57--62,
  1996.

\bibitem[Cortes et~al.(2009)Cortes, Mohri, and Rostamizadeh]{Cortes09}
Cortes, C., Mohri, M., and Rostamizadeh, A.
\newblock Learning non-linear combinations of kernels.
\newblock In \emph{Neural Information Processing Systems}, 2009.

\bibitem[Cortes et~al.(2012)Cortes, Mohri, and Rostamizadeh]{CorMohRos12}
Cortes, C., Mohri, M., and Rostamizadeh, A.
\newblock Algorithms for learning kernels based on centered alignment.
\newblock \emph{Journal of Machine Learning Research}, 13:\penalty0 795--828,
  2012.

\bibitem[Darlington(1968)]{darlington1968multiple}
Darlington, Richard~B.
\newblock Multiple regression in psychological research and practice.
\newblock \emph{Psychological bulletin}, 69\penalty0 (3):\penalty0 161, 1968.

\bibitem[Dauxois \& Nkiet(1998)Dauxois and Nkiet]{DauNki98}
Dauxois, J. and Nkiet, G.~M.
\newblock Nonlinear canonical analysis and independence tests.
\newblock \emph{Annals of Statistics}, 26\penalty0 (4):\penalty0 1254--1278,
  1998.

\bibitem[Fukumizu et~al.(2007)Fukumizu, Bach, and
  Gretton]{fukumizu2007statistical}
Fukumizu, K., Bach, F.~R., and Gretton, A.
\newblock Statistical consistency of kernel canonical correlation analysis.
\newblock \emph{The Journal of Machine Learning Research}, 8:\penalty0
  361--383, 2007.

\bibitem[Fukumizu et~al.(2008)Fukumizu, Gretton, Sun, and
  Sch\"olkopf]{FukGreSunSch08}
Fukumizu, K., Gretton, A., Sun, X., and Sch\"olkopf, B.
\newblock Kernel measures of conditional dependence.
\newblock In \emph{Advances in Neural Information Processing Systems}, pp.\
  489--496. MIT Press, 2008.

\bibitem[Gilbertson \& Gutmann(2007)Gilbertson and Gutmann]{gilbertson2007}
Gilbertson, R.~J. and Gutmann, D.~H.
\newblock Tumorigenesis in the brain: location, location, location.
\newblock \emph{Cancer research}, 67\penalty0 (12):\penalty0 5579--5582, 2007.

\bibitem[Gray \& Atkinson(2003)Gray and Atkinson]{gray2003language}
Gray, R.~D. and Atkinson, Q.~D.
\newblock Language-tree divergence times support the {Anatolian} theory of
  {Indo-European} origin.
\newblock \emph{Nature}, 426\penalty0 (6965):\penalty0 435--439, 2003.

\bibitem[Gretton(2015)]{Asimplercondition}
Gretton, A.
\newblock A simpler condition for consistency of a kernel independence test.
\newblock arXiv:1501.06103, 2015.

\bibitem[Gretton \& Gyorfi(2010)Gretton and Gyorfi]{GreGyo10}
Gretton, A. and Gyorfi, L.
\newblock Consistent nonparametric tests of independence.
\newblock \emph{Journal of Machine Learning Research}, 11:\penalty0 1391--1423,
  2010.

\bibitem[Gretton et~al.(2005)Gretton, Bousquet, Smola, and
  Sch\"olkopf]{GreBouSmoSch05}
Gretton, A., Bousquet, O., Smola, A.~J., and Sch\"olkopf, B.
\newblock Measuring statistical dependence with {H}ilbert-{S}chmidt norms.
\newblock In \emph{Algorithmic Learning Theory}, pp.\  63--77, 2005.

\bibitem[Gretton et~al.(2008)Gretton, Fukumizu, Teo, Song, Sch{\"o}lkopf, and
  Smola]{GreFukTeoSonetal08}
Gretton, A., Fukumizu, K., Teo, C.-H., Song, L., Sch{\"o}lkopf, B., and Smola,
  A.~J.
\newblock A kernel statistical test of independence.
\newblock In \emph{Neural Information Processing Systems}, pp.\  585--592,
  2008.

\bibitem[Gretton et~al.(2012)Gretton, Sejdinovic, Strathmann, Pontil, Fukumizu,
  and Sriperumbudur]{gretton2012optimal}
Gretton, A., Sejdinovic, D., Strathmann, H.and~Balakrishnan, S., Pontil, M.,
  Fukumizu, K., and Sriperumbudur, B.~K.
\newblock Optimal kernel choice for large-scale two-sample tests.
\newblock In \emph{Advances in Neural Information Processing Systems}, pp.\
  1205--1213, 2012.

\bibitem[Gunn \& Kandola(2002)Gunn and Kandola]{Gunn02}
Gunn, S.~R. and Kandola, J.~S.
\newblock Structural modelling with sparse kernels.
\newblock \emph{Machine Learning}, 48\penalty0 (1):\penalty0 137--163, 2002.

\bibitem[Heller et~al.(2013)Heller, Heller, and Gorfine]{HelHelGor13}
Heller, R., Heller, Y., and Gorfine, M.
\newblock A consistent multivariate test of association based on ranks of
  distances.
\newblock \emph{Biometrika}, 100\penalty0 (2):\penalty0 503--510, 2013.

\bibitem[Hoeffding(1963)]{hoeffding1963probability}
Hoeffding, W.
\newblock Probability inequalities for sums of bounded random variables.
\newblock \emph{Journal of the American statistical association}, 58\penalty0
  (301):\penalty0 13--30, 1963.

\bibitem[Kinney \& Atwal(2014)Kinney and Atwal]{KinAtw14}
Kinney, J.~B. and Atwal, G.~S.
\newblock Equitability, mutual information, and the maximal information
  coefficient.
\newblock \emph{Proceedings of the National Academy of Sciences}, 2014.

\bibitem[Koehn(2005)]{koehn2005europarl}
Koehn, P.
\newblock Europarl: A parallel corpus for statistical machine translation.
\newblock In \emph{MT summit}, volume~5, pp.\  79--86, 2005.

\bibitem[Palm et~al.(2009)Palm, Figarella-Branger, Chapon, Lacroix, Gray,
  Scaravilli, Ellison, Salmon, Vikkula, and Godfraind]{Palm2009}
Palm, T., Figarella-Branger, D., Chapon, F., Lacroix, C., Gray, F., Scaravilli,
  F., Ellison, D.~W., Salmon, I., Vikkula, M., and Godfraind, C.
\newblock Expression profiling of ependymomas unravels localization and tumor
  grade-specific tumorigenesis.
\newblock \emph{Cancer}, 115\penalty0 (17):\penalty0 3955--3968, 2009.

\bibitem[Peters et~al.(2012)Peters, Braschler, and Clough]{PetBraClo12}
Peters, C., Braschler, M., and Clough, P.
\newblock \emph{Multilingual Information Retrieval: From Research to Practice}.
\newblock Springer, 2012.

\bibitem[Puget et~al.(2012)Puget, Philippe, Bax, Job, Varlet, Junier,
  Andreiuolo, Carvalho, Reis, and Guerrini-Rousseau]{puget2012mesenchymal}
Puget, S., Philippe, C., Bax, D., Job, B., Varlet, P., Junier, M.~P.,
  Andreiuolo, F., Carvalho, D., Reis, R., and Guerrini-Rousseau, L.
\newblock Mesenchymal transition and {PDGFRA} amplification/mutation are key
  distinct oncogenic events in pediatric diffuse intrinsic pontine gliomas.
\newblock \emph{PloS one}, 7\penalty0 (2):\penalty0 e30313, 2012.

\bibitem[Reshef et~al.(2011)Reshef, Reshef, Finucane, Grossman, McVean,
  Turnbaugh, Lander, Mitzenmacher, and Sabeti]{ResTesFinGroetal11}
Reshef, D., Reshef, Y., Finucane, H., Grossman, S., McVean, G., Turnbaugh, P.,
  Lander, E., Mitzenmacher, M., and Sabeti, P.
\newblock Detecting novel associations in large datasets.
\newblock \emph{Science}, 334\penalty0 (6062), 2011.

\bibitem[Sejdinovic et~al.(2013{\natexlab{a}})Sejdinovic, Gretton, and
  Bergsma]{SejGreBer13}
Sejdinovic, D., Gretton, A., and Bergsma, W.
\newblock A kernel test for three-variable interactions.
\newblock In \emph{Neural Information Processing Systems}, 2013{\natexlab{a}}.

\bibitem[Sejdinovic et~al.(2013{\natexlab{b}})Sejdinovic, Sriperumbudur,
  Gretton, and Fukumizu]{SejSriGreFuk13}
Sejdinovic, D., Sriperumbudur, B., Gretton, A., and Fukumizu, K.
\newblock Equivalence of distance-based and {RKHS}-based statistics in
  hypothesis testing.
\newblock \emph{Annals of Statistics}, 41\penalty0 (5):\penalty0 2263--2702,
  2013{\natexlab{b}}.

\bibitem[Sen \& Srivastava(2011)Sen and Srivastava]{SenSri11}
Sen, A. and Srivastava, M.
\newblock \emph{Regression Analysis -- Theory, Methods, and Applications}.
\newblock Springer-Verlag, 2011.

\bibitem[Serfling(1981)]{Serfling1981}
Serfling, R.~J.
\newblock \emph{Approximation theorems of mathematical statistics}.
\newblock Wiley Series in Probability and Statistics. Wiley, 1981.

\bibitem[Song et~al.(2012)Song, Smola, Gretton, Bedo, and
  Borgwardt]{SonSmoGreBedetal12}
Song, L., Smola, A., Gretton, A., Bedo, J., and Borgwardt, K.
\newblock Feature selection via dependence maximization.
\newblock \emph{Journal of Machine Learning Research}, 13:\penalty0 1393--1434,
  2012.

\bibitem[Sz\'{e}kely et~al.(2007)Sz\'{e}kely, Rizzo, and Bakirov]{SzeRizBak07}
Sz\'{e}kely, G., Rizzo, M., and Bakirov, N.
\newblock Measuring and testing dependence by correlation of distances.
\newblock \emph{Annals of Statistics}, 35\penalty0 (6):\penalty0 2769--2794,
  2007.

\bibitem[Trommershauser et~al.(2011)Trommershauser, Kording, and
  Landy]{TroKorLan11}
Trommershauser, J., Kording, K., and Landy, M.~S.
\newblock \emph{Sensory Cue Integration}.
\newblock Oxford University Press, 2011.

\bibitem[von Bahr(1965)]{bahr1965}
von Bahr, Bengt.
\newblock On the convergence of moments in the central limit theorem.
\newblock \emph{The Annals of Mathematical Statistics}, 36\penalty0
  (3):\penalty0 808--818, 06 1965.

\bibitem[Zhang et~al.(2011)Zhang, Peters, Janzing, B., and
  Sch{\"o}lkopf]{ZhaPetJanSch11}
Zhang, K., Peters, J., Janzing, D., B., and Sch{\"o}lkopf, B.
\newblock Kernel-based conditional independence test and application in causal
  discovery.
\newblock In \emph{27th Conference on Uncertainty in Artificial Intelligence},
  pp.\  804--813, 2011.

\end{thebibliography}
\bibliographystyle{icml2015}

\end{document}